\documentclass{article}
\usepackage{caption}
\usepackage{subcaption}
\usepackage{amssymb,amsmath,graphicx,float}                                                                             
\usepackage{amsthm}                                      
\usepackage{mathtools}
\usepackage{hyperref}
\usepackage{color}             
\usepackage{cite}                                                                                                 
\usepackage{tabularx}
\usepackage{algorithm, algpseudocode}

\newtheorem{definition}{Definition}

\newtheorem{problem}{Problem}
\newtheorem{proposition}{Proposition}
\newtheorem{remark}{Remark}
\newcommand{\argmin}{\mathrm{arg}\min}

\newcommand{\norm}[1]{\left\lVert#1\right\rVert}
   
\usepackage{times}
\usepackage{tikz}
\usetikzlibrary{calc,decorations.pathmorphing,shapes}                                                                  

\newcounter{sarrow}
\newcommand\xrsquigarrow[1]{
	\stepcounter{sarrow}
	\mathrel{\begin{tikzpicture}[baseline= {( $(current bounding box.south) + (0,-0.5ex)$ )}]
		\node[inner sep=.5ex] (\thesarrow) {$\scriptstyle #1$};
		\path[draw,<-,decorate,
		decoration={zigzag,amplitude=0.7pt,segment length=1.2mm,pre=lineto,pre length=4pt}] 
		(\thesarrow.south east) -- (\thesarrow.south west);
		\end{tikzpicture}}
}

\title{Transfer of Temporal Logic Formulas in Reinforcement Learning}

\author{Zhe~Xu\thanks{Zhe~Xu is with the Oden Institute
		for Computational Engineering and Sciences, University of Texas,
		Austin, Austin, TX 78712, Ufuk Topcu is with the Department
		of Aerospace Engineering and Engineering Mechanics, and the Oden Institute
		for Computational Engineering and Sciences, University of Texas,
		Austin, Austin, TX 78712, e-mail: zhexu@utexas.edu, utopcu@utexas.edu.}~ and Ufuk Topcu
}
  
\usepackage[margin=1.5in]{geometry}

\date{}

\begin{document}                                                                                                                                     
	 
	\maketitle                                                                                 
	
	\begin{abstract}                                                                                                                        
		Transferring high-level knowledge from a \textit{source task} to a \textit{target task} is an effective way to expedite reinforcement learning (RL). For example, propositional logic and first-order logic have been used as representations of such knowledge. We study the transfer of knowledge between tasks in which the timing of the events matters. We call such tasks  \textit{temporal tasks}. We concretize similarity between temporal tasks through a notion of \textit{logical transferability}, and develop a transfer learning approach between different yet \textit{similar} temporal tasks. We first propose an inference technique to extract \textit{metric
		interval temporal logic} (MITL) formulas in \textit{sequential disjunctive normal form} from labeled trajectories collected in RL of the two tasks. If logical transferability is identified through this inference, we construct a timed automaton for each \textit{sequential conjunctive subformula} of the inferred MITL formulas from both tasks. We perform RL on the \textit{extended state} which includes the locations and clock valuations of the timed automata for the source task. We then establish mappings between the corresponding components (clocks, locations, etc.) of the timed automata from the two tasks, and transfer the \textit{extended Q-functions} based on the established mappings. Finally, we perform RL on the \textit{extended state} for the target task, starting with the transferred extended Q-functions. Our results in \color{black}two case studies \color{black} show, depending on how similar the source task and the target task are, that the sampling efficiency for the target task can be improved by up to one order of magnitude by performing RL in the extended state space, and further improved by up to another order of magnitude using the transferred extended Q-functions.                                     
	\end{abstract}                                                                                                                  
	                                                                                                                                              
	\section{Introduction}                                                         
	\label{sec_intro}
	Reinforcement learning (RL) has been successful in numerous applications. In practice though, it often requires extensive exploration of the environment to achieve satisfactory performance, especially for complex tasks with sparse rewards \cite{Wang2017}. 
	
	The sampling efficiency and performance of RL can be improved if some high-level knowledge can be incorporated in the learning process \cite{tor-etal-ccai18}. Such knowledge can be also transferred from a \textit{source task} to a \textit{target task} if these tasks are \textit{logically similar} \cite{Taylor_ICML2007}. For example, propositional logic and first-order logic have been used as representations of knowledge in the form of \textit{logical structures} for transfer learning \cite{mihalkova_aaai07}. They showed that incorporating such logical similarities can expedite RL for the target task \cite{Torrey2008RuleEF}.                                                                      
	
	The transfer of high-level knowledge can be also applied to tasks where the timing of the events matters. We call such tasks as \textit{temporal tasks}. Consider the gridworld example in Fig.\ref{map_intro}. In the source task, the robot should first reach a green region $G^{\textrm{S}}$ and stay there for at least 4 time units, then reach another yellow region $Y^{\textrm{S}}$ within 40 time units. 
    In the target task, the robot should first reach a green region $G^{\textrm{T}}$ and stay there for at least 5 time units, then reach another yellow region $Y^{\textrm{T}}$ within 40 time units. In both tasks, the green and yellow regions are a priori unknown to the robot. After 40 time units, the robot obtains a reward of 100 if it has completed the task and obtains a reward of -10 otherwise. It is intuitive that the two tasks are similar at a high level despite the differences in the specific regions in the workspace and timing requirements. 
         	
    \begin{figure}[th]
    	\centering                              
    	\includegraphics[width=10cm]{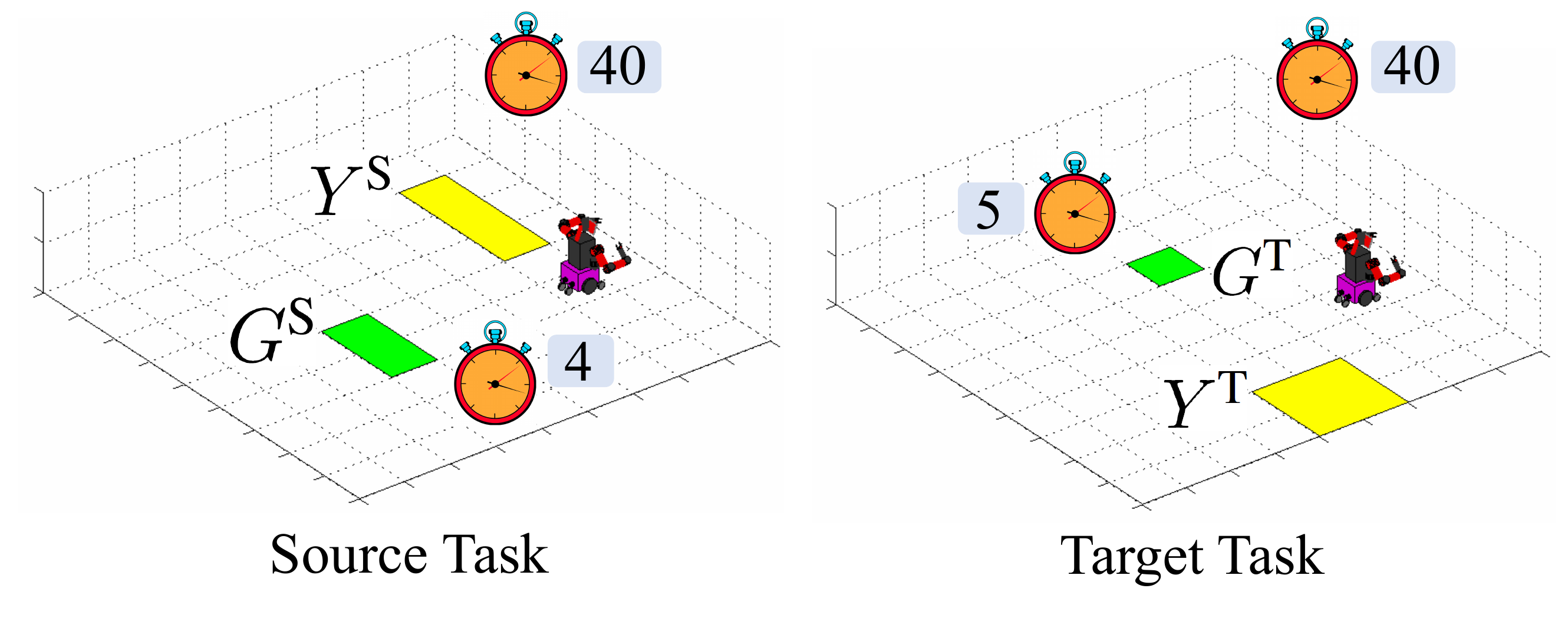}\caption{An illustrative example where the source task and the target task are \textit{logically similar}.}                                            
    	\label{map_intro} 
    \end{figure}

    Transfer learning between temporal tasks is complicated due to the following factors: (a) No formally defined criterion exists for logical similarities between temporal tasks. (b) Logical similarities are often implicit and need to be identified from data. (c) There is no known automated mechanism to transfer the knowledge based on logical similarities.                                                                                

    In this paper, we propose a transfer learning approach for temporal tasks in two levels: transfer of logical structures and transfer of low-level implementations. For ease of presentation, we focus on Q-learning \cite{Watkins1992}, while the general methodology applies readily to other forms of RL. 
    
    In the first level, we represent the high-level knowledge in temporal logic \cite{Pnueli}, which has been used in many applications in robotics and artificial intelligence \cite{Kress2011,SonThanh_MTL}. Specifically, we use a fragment of metric interval temporal logic (MITL) with bounded time intervals. We transfer such knowledge from a source task to a target task based on the hypothesis of \textit{logical transferability} (this notion will be formalized in Section \ref{Sec_Logical_Transferability}) between the two  tasks.                            
                                                               
\color{black}
    To identify logical transferability, we develop an inference technique that extracts \textit{informative} MITL formulas (this notion will be formalized in Section \ref{Sec_info}) in \textit{sequential disjunctive normal form}. These formulas effectively classify the labeled trajectories collected in RL of the two tasks. Referring back to the example shown in Fig.\ref{map_intro}, the regions corresponding to the \textit{atomic predicates} of the inferred MITL formulas are shown in Fig. \ref{map_intro_infer} (see Section \ref{case_I} for details). It can be seen that the inferred $\bar{G}^{\textrm{S}}$, $\bar{Y}^{\textrm{S}}$ and $\bar{G}^{\textrm{T}}$ are exactly the same as $G^{\textrm{S}}$, $Y^{\textrm{S}}$ and $G^{\textrm{T}}$, the inferred $\bar{Y}^{\textrm{T}}$ is different but close to $Y^{\textrm{T}}$.                                                       
 
   \begin{figure}[th]          
	   \centering                       
	   \color{black}       
        \includegraphics[width=10cm]{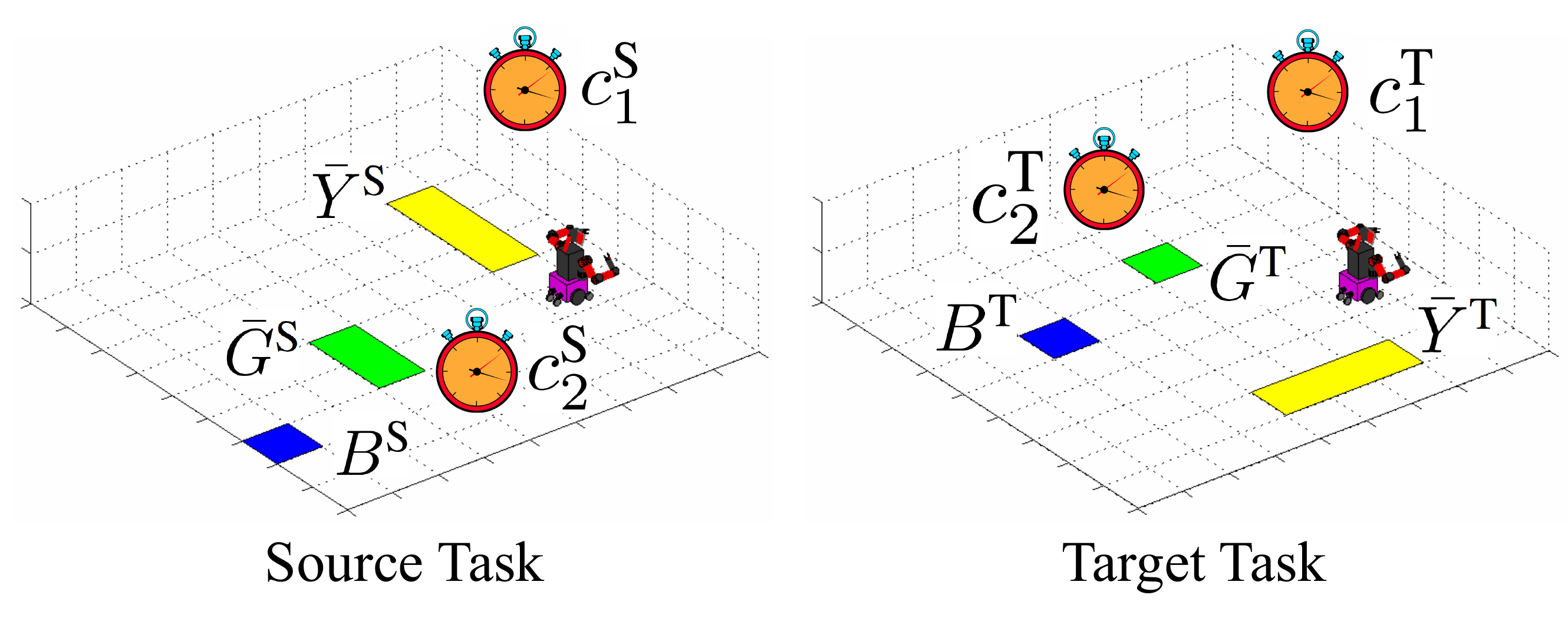}\caption{Inferred regions in the illustrative example.}   
	\label{map_intro_infer} 
    \end{figure}

   If the inference process indeed identifies logical transferability and extracts the associated MITL formulas, we construct a timed automaton for each \textit{sequential conjunctive subformula} of the inferred MITL formulas from the source task and the target task. For example, for the target task, a clock $c^{\textrm{T}}_1$ starts from the beginning for recording the time to reach the green and yellow regions, and a clock $c^{\textrm{T}}_2$ only starts when the robot reaches the green regions to record whether it stays there for 5 time units. The \textit{locations} of the timed automaton mark the stages of the task completion, as the location changes when the robot reaches $\bar{G}^{\textrm{T}}$, and also changes when the robot stays in $\bar{G}^{\textrm{T}}$ for 5 time units. We combine the locations and clock valuations of the timed automaton with the state of the robot to form an \textit{extended state}, and perform RL in the extended state space.                                                                        
 
   In the second level, we transfer the \textit{extended Q-functions} (i.e., Q-function on the extended states) from the source task to the target task. We first perform RL in the extended state space
   for the source task. Next, we establish mappings between the corresponding components (clocks, locations, etc.) of the timed automata from the two tasks based on the identified logical transferability. As in the example, we establish mappings between the regions $\bar{G}^{\textrm{S}}$ and $\bar{G}^{\textrm{T}}$, and between the regions $\bar{Y}^{\textrm{S}}$ and $\bar{Y}^{\textrm{T}}$. Similar mappings are established between the clocks $c^{\textrm{S}}_1$ and $c^{\textrm{T}}_1$, and between the clocks $c^{\textrm{S}}_2$ and $c^{\textrm{T}}_2$. Then, we transfer the extended Q-functions based on these mappings. For example, before the green regions are reached, $B^{\textrm{S}}$ is the most similar to $B^{\textrm{T}}$ in relative positions with respect to the centers of $\bar{G}^{\textrm{S}}$ and $\bar{G}^{\textrm{T}}$, respectively. Therefore, the extended Q-function with certain clock valuation and the state in $B^{\textrm{T}}$ in the target task is transferred from the extended Q-function with the most similar clock valuation and the state in $B^{\textrm{S}}$ in the source task. Finally, we perform Q-learning in the extended state space starting with the transferred extended Q-functions.           
   \color{black}
   
   The implementation of the proposed approach in two case studies show, in both levels, that the sampling efficiency is significantly improved for RL of the target task.                                                                                                                                                                     
	
   \noindent\textbf{Related Work.} Our work is closely related to the work on RL with temporal logic specifications \cite{Aksaray2016,Li2017,Icarte2018,Fu2014ProbablyAC,Min2017,Alshiekh2018SafeRL}.  The current results mostly rely on the assumption that the high-level knowledge (i.e., temporal logic specifications) are given, \color{black}while in reality they are often implicit and need to be inferred from data.\color{black}
   
   The methods for inferring temporal logic formulas from data can be found in \cite{Hoxha2017,Kong2017TAC,Bombara2016,Neider,zhe_advisory,zhe2016,VazquezChanlatte2018LearningTS,zhe_info,NIPS2018Shah}. \color{black}The inference method used in this paper is inspired from \cite{Bombara2016} and \cite{zhe_info}.\color{black}
   
  \color{black}While there has been no existing work on RL-based transfer learning utilizing similarity between temporal logic formulas, \color{black}the related work on transferring first-order logical structures or rules for expediting RL can be found in \cite{Taylor_ICML2007,Torrey_MLN2010,Torrey2008RuleEF}, \color{black}and the related work on transferring logical relations for action-model acquisition can be found in \cite{zhuo2014}.  \color{black}                       
	
	\section{Preliminaries}		  
\subsection{Metric Interval Temporal Logic}
\label{sec_MITL}
Let $\mathbb{B}=\{\top, \bot\}$ (tautology and contradiction, respectively) be
the Boolean domain and $\mathbb{T}=\{0, 1, 2, \dots\}$ be a discrete set of time indices. The underlying
system is modeled by a \textit{Markov decision process} (MDP) $\mathcal{M}= (S, A, P)$, where the state space $S$ and action set $A$ are finite, $P: S\times A\times S\rightarrow[0,1]$ is a transition probability
distribution. A trajectory $s_{0:L}=s_0s_1\cdots s_L$ describing an evolution of the MDP $\mathcal{M}$ is a function from $\mathbb{T}$ to                    
$S$. Let $AP$ be a set of \textit{atomic predicates}. 

The syntax of the MITL$_{f}$ fragment of time-bounded MITL formulas is defined recursively as follows\footnote{Although other temporal operators such as \textquotedblleft Until \textquotedblright ($\mathcal{U}$) may also appear in the full syntax of MITL, they are omitted from the syntax here as they can be hard to interpret and are not often used for the inference of temporal logic formulas \cite{Kong2017TAC}.}:                                       
\[
\phi:=\top\mid \rho\mid\lnot\phi\mid\phi_{1}\wedge\phi_{2}\mid\phi_{1}\vee
\phi_{2}\mid \Diamond_{I}\phi\mid \Box_{I}\phi,                                                                                            
\]
where $\rho\in AP$ is an atomic                      
predicate; $\lnot$ (negation), $\wedge$ (conjunction), $\vee$ (disjunction)                          
are Boolean connectives; $\Diamond$ (eventually) and $\Box$ (always) are temporal operators; and $I$ is a bounded interval of
the form $I=[i_{1},i_{2}]$ $(i_{1}<i_{2}, i_{1},i_{2} \in \mathbb{T})$.  \color{black}For example, the MITL$_{f}$ formula $\Box_{[2, 5]}(x>3)$ reads as ``$x$ is always greater than 3 during the time interval [2, 5]''.\color{black}                         

A \textit{timed word} generated by a trajectory $s_{0:L}$ is defined as   
a sequence $(\mathcal{L}(s_{t_1}),t_1),\dots, (\mathcal{L}(s_{t_m}),t_m)$, where $\mathcal{L}: S\rightarrow2^{\mathcal{AP}}$ is a labeling function assigning to each state $s\in S$ a subset of atomic predicates in $\mathcal{AP}$ that hold true at state $s$, $t_1=0$, $t_m=L$, $t_{k-1}<t_k$ ($k\in[2,m]$) and for $k\in[1,m-1]$, $t_{k+1}$ is the largest time index such that $\mathcal{L}(s_{t})=\mathcal{L}(s_{t_{k}})$ for all $t\in[t_{k},t_{k+1})$. The satisfaction of an MITL$_{f}$ formula by timed words as Boolean
semantics can be found in \cite{Alur_Punctuality}. We
say that a trajectory $s_{0:L}$ satisfies an MITL$_{f}$ formula $\phi$, denoted as $s_{0:L}\models\phi$, if and only if the timed word generated by $s_{0:L}$ satisfies $\phi$. As the time intervals $I$ in MITL$_{f}$ formulas are bounded intervals, MITL$_{f}$ formulas can be satisfied and violated by trajectories of finite lengths.                   	

\subsection{Timed Automaton}
Let $C$ be a finite set of clock variables. The set $\mathcal{C}_C$ of clock constraints is defined by \cite{Ouaknine2005}
\[
\varphi_C:=\top\mid c\bowtie k \mid\varphi_1\wedge\varphi_2,
\]
where $k\in\mathbb{N}$, $c\in C$ and $\bowtie\in\{<,\le,>,\ge\}$.

\begin{definition}\cite{Alur94}                                                  
	\label{TA}
	A timed automaton is a tuple $\mathcal{A}=(\Sigma,\mathcal{Q},q^0, C, \mathcal{F},\Delta)$, where 
	$\Sigma$ is a finite alphabet of input symbols, $\mathcal{Q}$ is a set of locations, $q^0\in\mathcal{Q}$ is the initial location, $C$ is a finite set of clocks, $\mathcal{F}\subset\mathcal{Q}$ is a set of accepting locations, $\Delta\subset\mathcal{Q}\times\Sigma\times\mathcal{Q}\times\mathcal{C}_C\times2^{C}$ is the transition function, $e=(q, \sigma, q', \varphi_C, r_C)\in \Delta$ represents a transition from $q$ to $q'$ labeled by $\sigma$, provided the precondition $\varphi_C$ on the clocks is met, $r_C$ is the set of clocks that are reset to zero.                                                                                             
\end{definition}                                                       

\begin{remark}
	We focus on timed automata with discrete time, which are also called \textit{tick automata} in \cite{Gruber_tick}. 
\end{remark}

A timed automaton $\mathcal{A}$ is \textit{deterministic} if and only if for each location and input symbol there is at most one transition to the next location. We denote by $v=(v_1,\dots,v_{\vert C\vert})\in V\subset\mathbb{T}^{\vert C\vert}$ the \textit{clock valuation} of $\mathcal{A}$ (we denote by $\vert C\vert$ the cardinality of $C$), where $v_k\in V_k\subset\mathbb{T}$ is the value of clock $c_k\in C$. 
For a timed word $\nu=(\sigma_0,t_0), (\sigma_1,t_1), \dots, (\sigma_m,t_m)$ (where $t_0<t_1<\dots<t_m$, $\sigma_k\in\Sigma$ for $k\in[0,m]$) and writing $d_k:=t_{k+1}-t_k$, a \textit{run} of $\mathcal{A}$ on $\nu$ is defined as\\
$(q^0, v^0)\xrightarrow{\sigma_0}(q^1, v^1)\xrsquigarrow{d_0}(q^2, v^2)\xrightarrow{\sigma_1}(q^3,  v^3)\xrsquigarrow{d_1}\dots\xrsquigarrow{d_{m-1}}(q^{2m}, v^{2m})\xrightarrow{\sigma_m}(q^{2m+1}, v^{2m+1})$, \\
where the \textit{flow-step} relation is defined by $(q, v) \xrightarrow{d} (q, v+d)$ where $d\in\mathbb{R}_{>0}$; the \textit{edge-step} relation is defined by $(q, v) \xrsquigarrow{\sigma} (q', v')$ if and only if there is an edge $(q, \sigma, q', \varphi_{C}, r_{C})\in \Delta$ such that $\sigma\in\Sigma$, $v$ satisfies $\varphi_{C}$, $v'_k=0$ for all $c_k\in r_{C}$ and $v'_k=v_k$ for all $c_k\not\in r_{C}$. A finite run is \textit{accepting} if the last location in the run belongs to $\mathcal{F}$. A timed word $\nu$ is accepted by $\mathcal{A}$ if there is some accepting run of $\mathcal{A}$ on $\nu$.                                                                                                                                                        
	
\section{Information-Guided Inference of Temporal Logic Formulas}                                                      
\label{Sec_info}
We now introduce the \textit{information gain} provided by an MITL$_{f}$ formula, the problem formulation and the algorithm to extract MITL$_{f}$ formulas from labeled trajectories.     

\subsection{Information Gain of MITL$_{f}$ Formulas} 
We denote by $\mathcal{B}_L$ the set of all possible trajectories with length $L$ generated by the MDP $\mathcal{M}$, and use $\mathcal{G}_L: \mathcal{B}_L\rightarrow[0,1]$ to denote a prior probability distribution (e.g., uniform distribution) over $\mathcal{B}_L$. We use $\mathbb{P}_{\mathcal{B}_L,\phi}$ to denote the probability of a trajectory $s_{0:L}$ satisfying $\phi$ in $\mathcal{B}_L$ based on $\mathcal{G}_L$.                                        

\begin{definition}
	Given a prior probability distribution $\mathcal{G}_L$ and an MITL$_{f}$ formula $\phi$ such that $\mathbb{P}_{\mathcal{B}_L,\phi}>0$, we define $\mathcal{\bar{G}}^{\phi}_L: \mathcal{B}_L\rightarrow[0,1]$ as the posterior probability distribution, given that $\phi$ evaluates to true, which is expressed as
	\[                           
	\begin{split}
	\mathcal{\bar{G}}^{\phi}_L(s_{0:L}):=
	\begin{cases}
	\frac{\mathcal{G}_L(s_{0:L})}{\mathbb{P}_{\mathcal{B}_L,\phi}}, ~~~~~~~~\mbox{if}~s_{0:L}\models\phi,\\ 
	0,~~~~~~~~~~~~~~~~~~~~~\mbox{otherwise}.
	\end{cases} 
	\end{split}
	\]
	\label{know}
\end{definition}

The expression of $\mathcal{\bar{G}}^{\phi}_L$ can be derived using Bayes' theorem. We use the fact that the probability of $\phi$ evaluating to true given $s_{0:L}$ is 1, if $s_{0:L}$ satisfies $\phi$; and it is 0 otherwise.	

\begin{definition}     
	When the prior probability distribution $\mathcal{G}_L$ is updated to the posterior probability distribution $\mathcal{\bar{G}}^{\phi}_L$, we define the information gain as                         
	\begin{center}   
		$\mathcal{I}(\mathcal{G}_L,\mathcal{\bar{G}}^{\phi}_L):=D_{\rm{KL}}(\mathcal{\bar{G}}^{\phi}_L\vert\vert\mathcal{G}_L)/L,$
	\end{center}   
	where $D_{\rm{KL}}(\mathcal{\bar{G}}^{\phi}_L\vert\vert\mathcal{G}_L)$
	is the Kullback-Leibler divergence from $\mathcal{G}_L$ to $\mathcal{\bar{G}}^{\phi}_L$.                                              
	\label{KL}
\end{definition}  

\begin{proposition} 
	\label{exp} 
	For an MITL$_{f}$ formula $\phi$, if $\mathbb{P}_{\mathcal{B}_L,\phi}>0$, then         
	\begin{center}                 
		$\mathcal{I}(\mathcal{G}_L,\mathcal{\bar{G}}^{\phi}_L) 
		=-\log{\mathbb{P}_{\mathcal{B}_L,\phi}}/L.$
	\end{center}      
\end{proposition}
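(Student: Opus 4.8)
The plan is to expand the Kullback--Leibler divergence directly from its definition and exploit the fact that the posterior $\mathcal{\bar{G}}^{\phi}_L$ arises from the prior $\mathcal{G}_L$ by simple conditioning on the event $\{s_{0:L}\models\phi\}$. First I would write
\[
D_{\rm{KL}}(\mathcal{\bar{G}}^{\phi}_L\vert\vert\mathcal{G}_L)=\sum_{s_{0:L}\in\mathcal{B}_L}\mathcal{\bar{G}}^{\phi}_L(s_{0:L})\log\frac{\mathcal{\bar{G}}^{\phi}_L(s_{0:L})}{\mathcal{G}_L(s_{0:L})},
\]
and observe that, by Definition \ref{know}, every summand associated with a trajectory not satisfying $\phi$ vanishes, since $\mathcal{\bar{G}}^{\phi}_L(s_{0:L})=0$ there (adopting the usual convention $0\log 0=0$). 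Hence the sum effectively ranges only over trajectories that satisfy $\phi$.

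The key simplification is that on this surviving support the log-ratio is \emph{constant}. For any $s_{0:L}\models\phi$ with $\mathcal{G}_L(s_{0:L})>0$, Definition \ref{know} gives $\mathcal{\bar{G}}^{\phi}_L(s_{0:L})/\mathcal{G}_L(s_{0:L})=1/\mathbb{P}_{\mathcal{B}_L,\phi}$, which is well-defined precisely because of the hypothesis $\mathbb{P}_{\mathcal{B}_L,\phi}>0$. Pulling this constant factor out of the sum yields
\[
D_{\rm{KL}}(\mathcal{\bar{G}}^{\phi}_L\vert\vert\mathcal{G}_L)=\bigl(-\log\mathbb{P}_{\mathcal{B}_L,\phi}\bigr)\sum_{s_{0:L}\models\phi}\mathcal{\bar{G}}^{\phi}_L(s_{0:L}).
\]

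Next I would show the remaining sum equals $1$: substituting the definition gives $\sum_{s_{0:L}\models\phi}\mathcal{G}_L(s_{0:L})/\mathbb{P}_{\mathcal{B}_L,\phi}$, and since $\mathbb{P}_{\mathcal{B}_L,\phi}=\sum_{s_{0:L}\models\phi}\mathcal{G}_L(s_{0:L})$ by definition of the satisfaction probability, this ratio is exactly $1$ (equivalently, $\mathcal{\bar{G}}^{\phi}_L$ is a genuine probability distribution supported on the satisfying trajectories). Combining the two displays gives $D_{\rm{KL}}(\mathcal{\bar{G}}^{\phi}_L\vert\vert\mathcal{G}_L)=-\log\mathbb{P}_{\mathcal{B}_L,\phi}$, and dividing by $L$ as in Definition \ref{KL} produces the claimed identity.

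I do not anticipate a real obstacle: the statement is essentially the observation that conditioning on an event of probability $p$ yields a KL divergence of $-\log p$ relative to the prior. The only points needing minor care are the convention $0\log 0=0$, so that non-satisfying trajectories contribute nothing, and the explicit use of $\mathbb{P}_{\mathcal{B}_L,\phi}>0$ to ensure both the posterior and the logarithm are well-defined. Everything else is a one-line algebraic manipulation of the sum.
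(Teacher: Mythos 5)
Your proof is correct and is essentially the paper's own argument: the paper simply states the result as ``straightforward from Definitions \ref{know} and \ref{KL},'' and your write-up supplies exactly that routine computation --- the log-ratio is constant ($-\log\mathbb{P}_{\mathcal{B}_L,\phi}$) on the satisfying trajectories, the posterior sums to $1$ there, and non-satisfying trajectories contribute nothing under the $0\log 0=0$ convention.
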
  
\begin{proof} 
	Straightforward from Definitions \ref{know} and \ref{KL}.
\end{proof} 
If $\phi=\top$, then $\mathbb{P}_{\mathcal{B}_L,\phi}=1$ and $\mathcal{I}(\mathcal{G}_L,\mathcal{\bar{G}}^{\phi}_L)=0$, i.e., tautologies provide no information gain. For completeness, we also define that the information gain $\mathcal{I}(\mathcal{G}_L,$ $\mathcal{\bar{G}}^{\phi}_L)=0$ if $\mathbb{P}_{\mathcal{B}_L,\phi}=0$. So if $\phi=\bot$, then $\mathbb{P}_{\mathcal{B}_L,\phi}=0$ and $\mathcal{I}(\mathcal{G}_L,\mathcal{\bar{G}}^{\phi}_L)=0$, i.e., contradictions provide no information gain.                                 

For two MITL$_{f}$ formulas $\phi_1$ and $\phi_2$, we say $\phi_1$ is more \textit{informative} than $\phi_2$ with respect to the prior probability distribution $\mathcal{G}_L$ if $\mathcal{I}(\mathcal{G}_L,\mathcal{\bar{G}}^{\phi_1}_L)>\mathcal{I}(\mathcal{G}_L,\mathcal{\bar{G}}^{\phi_2}_L)$.

Based on Proposition \ref{exp}, the computation of the information gain requires the computation of $\mathbb{P}_{\mathcal{B}_L,\phi}$. We point the reader to \cite{zhe_info} for a recursive method to compute $\mathbb{P}_{\mathcal{B}_L,\phi}$.                                  
                                                         
\subsection{Problem Formulation} 
We now provide some related definitions for formulating the inference problem. Let a set $\mathcal{P}$ of \textit{primitive structures} \cite{Bombara2016} used in the rest of the paper be
\begin{align}
\mathcal{P}:=\{\Diamond_{I}\rho, \Box_{I}\rho, \Diamond_{I}\Box_{I'}\rho, \Box_{I}\Diamond_{I'}\rho\},          
\label{template}
\end{align}                                       
where $I=[i_{1},i_{2}]$ $(i_{1}<i_{2}, i_{1},i_{2}\in \mathbb{T})$, $I'=[0,i_{2}]$ $(i_{2}>0,$ $i_{2} \in \mathbb{T})$, and $\rho$ is an atomic predicate. We call an MITL$_{f}$ formula $\phi$ a \textit{primitive MITL$_{f}$ formula} if $\phi$ follows one of the primitive structures in $\mathcal{P}$ or the negation of such a structure.

\begin{definition}
	For an MITL$_{f}$ formula $\phi$, we define the start-effect time $t_{\rm{s}}(\phi)$ and end-effect time $t_{\rm{e}}(\phi)$ recursively as                                                            
	\begin{align}\nonumber
	\begin{split}  
	t_{\rm{s}}(\color{black}\rho\color{black})=& t_{\rm{e}}(\color{black}\rho\color{black})= 0,
	t_{\rm{s}}(\lnot\phi) = t_{\rm{s}}(\phi), t_{\rm{e}}(\lnot\phi) = t_{\rm{e}}(\phi), \\
	t_{\rm{s}}(\phi_1\wedge\phi_2) =& \min\{t_{\rm{s}}(\phi_1), t_{\rm{s}}(\phi_2)\}, \\t_{\rm{e}}(\phi_1\wedge\phi_2) =& \max\{t_{\rm{e}}(\phi_1), t_{\rm{e}}(\phi_2)\},\\
	t_{\rm{s}}(\Diamond_{[t_1, t_2]}\phi) =& t_{\rm{s}}(\phi)+t_1, ~~~~~~~t_{\rm{e}}(\Diamond_{[t_1, t_2]}\phi) = t_{\rm{e}}(\phi)+t_2,\\	
	t_{\rm{s}}(\Box_{[t_1, t_2]}\phi) =& t_{\rm{s}}(\phi)+t_1, ~~~~~~~t_{\rm{e}}(\Box_{[t_1, t_2]}\phi) = t_{\rm{e}}(\phi)+t_2.
	\end{split}
	\end{align}                               		
	\label{effect} 
\end{definition}            

\begin{definition}
	\label{MITL_seq}
	An MITL$_{f}$ formula $\phi$ is in disjunctive normal form if $\phi$ is expressed in the form of $(\phi^1_1\wedge\dots\wedge\phi^{n_1}_1)\vee\dots\vee(\phi^1_{m}\wedge\dots\wedge\phi^{n_m}_m)$, where each $\phi^j_i$ is a primitive MITL$_{f}$ formula (also called primitive subformula of $\phi$). If, for any $i\in[1,m]$ and for all $j, k\in[1, n_i]$ such that $j<k$, it holds that $t_{\rm{e}}(\phi^j_i)<t_{\rm{s}}(\phi^k_i)$, then we say $\phi$ is in sequential disjunctive normal form (SDNF) and we call each $\phi_i:=\phi^1_{i}\wedge\dots\wedge\phi^{n_i}_i$ a \textit{sequential conjunctive subformula}.
\end{definition} 

In the following, we consider MITL$_{f}$ formulas only in the SDNF for reasons that will become clear in Section \ref{Sec_Transfer_Learning}. We define the \textit{size} of an MITL$_{f}$ formula $\phi$ in the SDNF, denoted as $\varrho(\phi)$, as the number of primitive MITL$_{f}$ formulas in $\phi$. 

Suppose that we are given a set $\mathcal{S}_L$ $=\{(s^k_{0:L},l_k)\}^{N_{\mathcal{S}_L}}_{k=1}$ of labeled trajectories, where $l_k=1$ and $l_k=-1$ represent desired and undesired behaviors, respectively. We define the \textit{satisfaction signature} $g_{\phi}(s^k_{0:L})$ of a trajectory $s^k_{0:L}$ as follows: $g_{\phi}(s^k_{0:L})=1$, if $s^k_{0:L}$ satisfies $\phi$; and $g_{\phi}(s^k_{0:L})=-1$, if $s^k_{0:L}$ does not satisfy $\phi$. Note that here we assume that $L$ is sufficiently large, thus $s^k_{0:L}$
either satisfies or violates $\phi$. A labeled trajectory $(s^k_{0:L}, l_k)$ is \textit{misclassified} by $\phi$ if $g_{\phi}(s^k_{0:L})\neq l_k$. We use $CR(S_L, \phi)=\vert\{(s^k_{0:L},l_k)\in\mathcal{S}_L: g_{\phi}(s^k_{0:L})=l_k\}\vert/\vert\mathcal{S}_L\vert$ to denote the classification rate of $\phi$ in $\mathcal{S}_L$.                                                           

\begin{problem}
	Given a set $\mathcal{S}_L=\{(s^k_{0:L},$ $l_k)\}^{N_{\mathcal{S}_L}}_{k=1}$ of labeled trajectories, a prior probability distribution $\mathcal{G}_L$, real constant $\zeta\in(0,1]$ and integer constant $\varrho_{\rm{th}}\in(0, \infty)$, construct an MITL$_{f}$ formula $\phi$ in the SDNF that maximizes $\mathcal{I}(\mathcal{G}_L,\mathcal{\bar{G}}^{\phi}_L)$ while satisfying
	\begin{itemize}
		\item the classification constraint $CR(\mathcal{S}_L,\phi)\ge\zeta$ and
		\item the size constraint
		$\varrho(\phi)\le\varrho_{\rm{th}}$. 
	\end{itemize}      
	\label{problem1}                                        
\end{problem}  

Intuitively, as there could be many MITL$_{f}$ formulas that satisfy the classification constraint and the size constraint, we intend to obtain the most informative one to be utilized and transferred as features of desired behaviors.	

We call an MITL$_{f}$ formula that satisfies both the two constraints of Problem \ref{problem1} a \textit{satisfying formula} for $\mathcal{S}_L$.

\subsection{Solution Based on Decision Tree} 	
We propose an inference technique, which is inspired by \cite{Bombara2016} and \cite{zhe_info}, in order to solve Problem \ref{problem1}. The technique consists of two steps. In the first step, we construct a decision tree where each non-leaf node is associated with a primitive MITL$_{f}$ formula (see the formulas inside the circles in Fig. \ref{tree}). In the second step, we convert the constructed decision tree to an MITL$_{f}$ formula in the SDNF.                                                                                                                            

\begin{figure}[th]
	\centering
	\includegraphics[width=10cm]{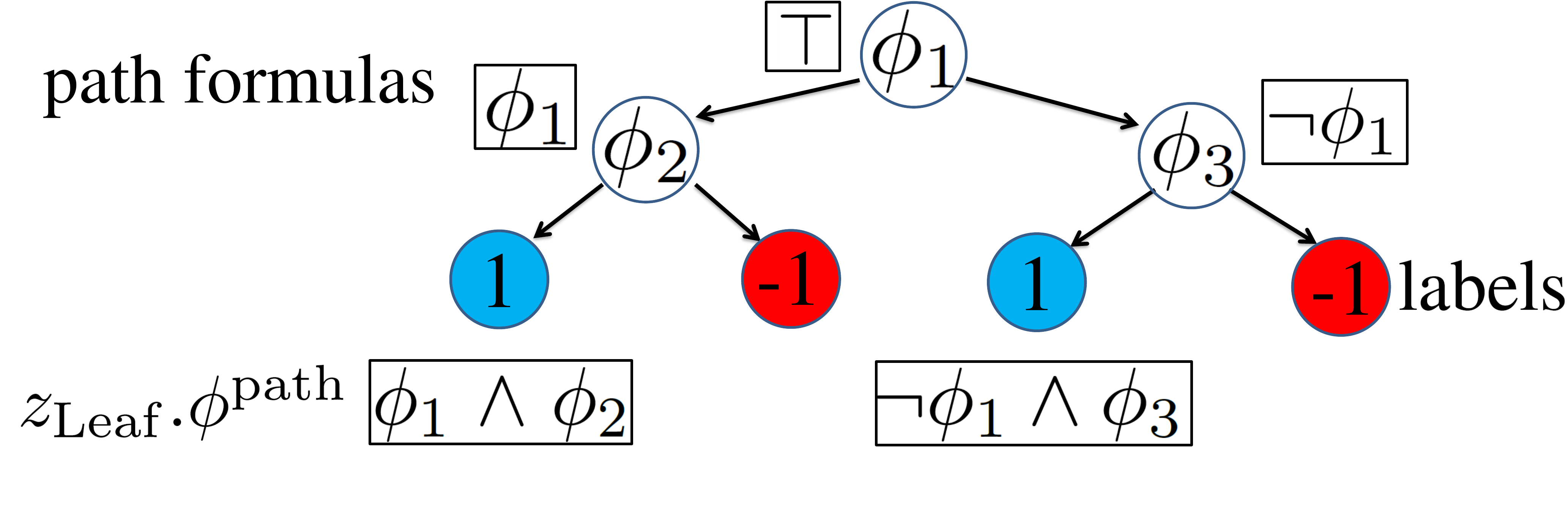}\caption{Illustration of a decision tree which can be converted to an MITL$_{f}$ formula in the SDNF.} 
	\label{tree}                                                                                                      
\end{figure}

In Algorithm \ref{algTree}, we construct the decision tree by recursively calling the $MITLtree$ procedure from the root node to each leaf node. There are three inputs to the $MITLtree$ procedure: (1) a set $\mathcal{S}$ of labeled trajectories assigned to the current node; (2) a formula $\phi^{\rm{path}}$ to reach the current node (also called the \textit{path formula}, see the formulas inside the rectangles in Fig. \ref{tree}); and (3) the depth $h$ of the current node. The set $\mathcal{S}$ assigned to the root node is initialized as $\mathcal{S}_L$ in Problem \ref{problem1}, $\phi^{\rm{path}}$ and $h$ are initialized as $\top$ and 0, respectively. 

For each node, we set a criterion $stop(\phi^{\rm{path}}, h, \mathcal{S})$ to determine whether it is a leaf node (Line \ref{stop}). Each leaf node is associated with label 1 or -1, depending on whether more than 50\% of the labeled trajectories assigned to that node are with label 1 or not.       

\begin{algorithm}[t]
	\caption{Information-Guided MITL$_{f}$ Inference.}
	\label{algTree}
	\begin{algorithmic}[1]
		\State \textbf{procedure} $MITLtree(\mathcal{S}=\{(s^k_{0:L}, l_k)\}^{N_{\mathcal{S}}}_{k=1}, \phi^{\rm{path}}, h)$
		\State \textbf{if} $stop(\phi^{\rm{path}}, h, \mathcal{S})$ \textbf{then} \label{stop}
		\State ~~Create node $z_{\rm{Leaf}}$ as a leaf node 
		\State ~~\textbf{if} node $z_{\rm{Leaf}}$ is associated with label 1 \label{leaf_s}	
		\State ~~~~$z_{\rm{Leaf}}.\phi^{\rm{path}}\gets\phi^{\rm{path}}$                                   
		\State ~~\textbf{end if} \label{leaf_e}	
		\State ~~return $z$ 	
		\State \textbf{end if} 
		\State Create node $z$ as a non-leaf node                  
		\State  Obtain $\Upsilon_z$ from (\ref{Theta_exp})                       
		\State $\displaystyle z.\phi\gets\arg\max_{\phi\in \mathcal{P},\theta\in\Upsilon_z}CR(\mathcal{S},\phi_{\theta})+\lambda\mathcal{I}(\mathcal{G}_L,\mathcal{\bar{G}}^{\phi_{\theta}}_L)$ \label{cost_func}
		\State $\{\mathcal{S}_{\top}, \mathcal{S}_{\bot}\}\leftarrow partition(\mathcal{S}, z.\phi)$ \label{part}
		\State $z.left\leftarrow MITLtree(\mathcal{S}_{\top}, \phi^{\rm{path}}\wedge z.\phi, h+1)$ \label{recur_s}	
		\State $z.right\leftarrow MITLtree(\mathcal{S}_{\bot}, \phi^{\rm{path}}\wedge\neg z.\phi, h+1)$ \label{recur_e}		
		\State return $z$
		\State\textbf{end procedure}                                                                                                      
	\end{algorithmic}
\end{algorithm}   

At each non-leaf node $z$, we construct a primitive MITL$_{f}$ formula $\phi_{\theta}$ parameterized by $\theta\in\Upsilon_z$, where 
\begin{align}
\begin{split}
\Upsilon_z:=&\{\theta| [t_{\rm{s}}(\phi_{\theta}),t_{\rm{e}}(\phi_{\theta})]\cap[t_{\rm{s}}(\phi'),t_{\rm{e}}(\phi')]=\emptyset \\
& \textrm{for each primitive subformula}~\phi'~\textrm{of}~\phi^{\rm{path}}\}. 
\label{Theta_exp}                                                                                  
\end{split}
\end{align}
\color{black}For example, for an MITL$_{f}$ formula $\phi_{\theta}=\Box_{[i_1, i_2]}(x>a)$, we have $\theta=[i_1, i_2, a]$. If $\phi^{\rm{path}}=\Diamond_{[1,15]}\Box_{[0,4]}(x>3)$, then $\Upsilon_z$ ensures that the start-effect time of $\phi_{\theta}$ is later than the end-effect time of $\phi^{\rm{path}}$ (which is 19). Essentially $\Upsilon_z$ guarantees that the primitive MITL$_{f}$ formula $\phi_{\theta}$ and primitive subformulas of $\phi^{\rm{path}}$ can be reordered to form a sequential conjunctive subformula (see Definition \ref{MITL_seq}). \color{black}

We use particle swarm optimization (PSO) \cite{PSO_Eberhart} to optimize $\theta$ for each primitive structure from $\mathcal{P}$ and compute a primitive MITL$_{f}$ formula $z.\phi=\phi^{\ast}_{\theta}$ which maximizes the objective function                
\begin{align}
\begin{split}
J(\mathcal{S},\phi_{\theta}):=& CR(\mathcal{S},\phi_{\theta})+\lambda\mathcal{I}(\mathcal{G}_L,\mathcal{\bar{G}}^{\phi_{\theta}}_L)
\end{split}
\label{obj}
\end{align}                                                                                           
in Line \ref{cost_func}, where $\lambda$ is a weighting factor.

With $z.\phi$, we partition the set $\mathcal{S}$ into $\mathcal{S}_{\top}$ and $\mathcal{S}_{\bot}$, where the trajectories in $\mathcal{S}_{\top}$ and $\mathcal{S}_{\bot}$ satisfy and violate $z.\phi$, respectively (Line \ref{part}). Then the procedure is called recursively to construct the left and right sub-trees for $\mathcal{S}_{\top}$ and $\mathcal{S}_{\bot}$, respectively (Lines \ref{recur_s}, \ref{recur_e}).

After the decision tree is constructed, for each leaf node associated with label 1, it is also associated with a path formula $z_{\rm{Leaf}}.\phi^{\rm{path}}$ (Line \ref{leaf_s} to Line \ref{leaf_e}). The path formula $z_{\rm{Leaf}}.\phi^{\rm{path}}$ is constructed recursively from the associated primitive 
MITL$_{f}$ formulas along the path from the root node to the parent of the leaf node (see Fig. \ref{tree}). We rearrange the primitive subformulas of each $z_{\rm{Leaf}}.\phi^{\rm{path}}$ in the order of increasing start-effect time to obtain a sequential conjunctive subformula. We then connect all the obtained sequential conjunctive subformulas with disjunctions. In this way, the obtained decision tree can be converted to an MITL$_{f}$ formula in the SDNF. As in the example shown in Fig. \ref{tree}, if $t_{\rm{s}}(\phi_1)<t_{\rm{s}}(\phi_2)$ and $t_{\rm{s}}(\lnot\phi_1)<t_{\rm{s}}(\phi_3)$, then the decision tree can be converted to $(\phi_1\wedge\phi_2)\vee(\lnot\phi_1\wedge\phi_3)$ in the SDNF. If $t_{\rm{s}}(\phi_2)<t_{\rm{s}}(\phi_1)$ and $t_{\rm{s}}(\phi_3)<t_{\rm{s}}(\lnot\phi_1)$, then the decision tree can be converted to  $(\phi_2\wedge\phi_1)\vee(\phi_3\wedge\lnot\phi_1)$ in the SDNF.                                                                                                                                                     

We set the criterion $stop(\phi^{\rm{path}}, h, \mathcal{S})$ as follows. If at least $\zeta$ (e.g., $95\%$) of the labeled trajectories assigned to the node are with the same label (Condition I) or the depth $h$ of the node reaches a set maximal depth $h_{\textrm{max}}$ (Condition II) or $\Upsilon_z$, as defined in (\ref{Theta_exp}), becomes the empty set (Condition III), then the node is a leaf node. If condition I holds for each leaf mode, then the obtained MITL$_{f}$ formula satisfies the classification constraint of Problem \ref{problem1}. If we set $h_{\textrm{max}}2^{h_{\textrm{max}}-1}\le\varrho_{\rm{th}}$, then the size constraint is guaranteed to be satisfied. 

\color{black}
The complexity of Algorithm \ref{problem1} for the average case can be determined through the Akra-Bazzi method as follows \cite{Bombara2016}:
$$
\Theta\Big(N_{\mathcal{S}}\cdot\big(1+\int_{1}^{N_{\mathcal{S}}}\frac{f(u)}{u^{2}}\mathrm{d}u\big)\Big), 
$$
where $f(N_{\mathcal{S}})$ is the complexity of the local PSO algorithm for $N_{\mathcal{S}}$ labeled trajectories, and $\Theta(\cdot)$ denotes the two-sided asymptotic notation for complexity bound.             
\color{black}

\section{Transfer Learning of Temporal Tasks Based on Logical Transferability}
\label{Sec_Transfer_Learning}
In this section, we first introduce the notion of \textit{logical transferability}. Then, we present the framework and algorithms for utilizing logical transferability for transfer learning. 
	
	\subsection{Logical Transferability} 
	\label{Sec_Logical_Transferability}
	To define logical transferability, we first define the \textit{structural transferability} between two MITL$_{f}$ formulas.

	For each primitive MITL$_{f}$ formula $\phi$, we use $O_T(\phi)$ to denote the temporal operator in $\phi$. For example, $O_T(\Diamond_{[5,8]}(x>3))=\Diamond$ (\color{black}eventually\color{black}) and $O_T(\Box_{[0, 8]}\Diamond_{[0,4]}(x<5))=\Box\Diamond$ (\color{black}always eventually\color{black}).
	
	\begin{definition}
		\label{MITL_equ}
		Two MITL$_{f}$ formulas (in the SDNF)
		\begin{align}\nonumber
		\phi=(\phi^1_1\wedge\dots\wedge\phi^{n_1}_1)\vee\dots\vee(\phi^1_{m}\wedge\dots\wedge\phi^{n_m}_m)
		\end{align} and
		\begin{align}\nonumber \hat{\phi}=(\hat{\phi}^1_1\wedge\dots\wedge\hat{\phi}^{\hat{n}_1}_1)\vee\dots\vee(\hat{\phi}^1_{\hat{m}}\wedge\dots\wedge\hat{\phi}^{\hat{n}_{\hat{m}}}_{\hat{m}})
		\end{align}
		are structurally equivalent, if and only if the followings hold:\\
		(1) $m=\hat{m}$ and, for every $i\in [1, m], n_i=\hat{n}_i$; and\\
		(2) For every $i\in[1, m]$ and every $j\in[1, n_i]$, $O_T(\phi^j_{i})=O_T(\hat{\phi}^j_{i})$.             
	\end{definition}                  
	 \color{black}
	\begin{definition}                    
		\label{MITL_transfer}
		For two MITL$_{f}$ formulas $\phi_1$ and $\phi_2$ in the SDNF, $\phi_2$ is structurally transferable from $\phi_1$ if and only if either of the following conditions holds:\\
		1) $\phi_1$ and $\phi_2$ are structurally equivalent;\\
		2) $\phi_2$ is in the form of $\phi^1_2\vee\dots\vee\phi^p_2$ ($p>1$), where each $\phi^k_2~(k=1,\dots,p)$ is structurally equivalent with $\phi_1$.                   
	\end{definition}
	\color{black}
	Suppose that we are given a source task $\mathcal{T}^{\textrm{S}}$ in the source environment $\mathcal{E}^{\textrm{S}}$ and a target task $\mathcal{T}^{\textrm{T}}$ in the target environment $\mathcal{E}^{\textrm{T}}$, with two sets $\mathcal{S}^{\textrm{S}}_L$ and $\mathcal{S}^{\textrm{T}}_L$ of labeled trajectories collected \color{black}during the initial episodes of RL (which we call the \textit{data collection phase}) in $\mathcal{E}^{\textrm{S}}$ and $\mathcal{E}^{\textrm{T}}$ respectively\color{black}. The trajectories are labeled based on a given task-related performance criterion. \color{black}To ensure the quality of inference, the data collection phase is chosen such that both $\mathcal{S}^{\textrm{S}}_L$ and $\mathcal{S}^{\textrm{T}}_L$ contain sufficient labeled trajectories with both label 1 and label -1.\color{black} We give the following definition for logical transferability.
	\begin{definition}                    
		\label{logic_transfer}
	$\mathcal{T}^{\textrm{T}}$ is \textit{logically transferable} from $\mathcal{T}^{\textrm{S}}$ based on $\mathcal{S}^{\textrm{S}}_L$, $\mathcal{S}^{\textrm{T}}_L$, $\zeta$ and $\varrho_{\rm{th}}$ (as defined in Problem \ref{problem1}), if and only if there exist \textit{satisfying} formulas $\phi^{\textrm{S}}$ for $\mathcal{S}^{\textrm{S}}_L$ and $\phi^{\textrm{T}}$ for $\mathcal{S}^{\textrm{T}}_L$ such that $\phi^{\textrm{T}}$ is \textit{structurally \color{black}transferable\color{black}} from $\phi^{\textrm{S}}$.
	\end{definition}
    \color{black}
     As in the introductory example in Section \ref{sec_intro}, the inferred MITL$_{f}$ formulas $\phi^{\textrm{S}}=\Diamond_{[1,15]}\Box_{[0,4]}\bar{G}^{\textrm{S}}\wedge \Diamond_{[21,39]}\bar{Y}^{\textrm{S}}$ and  
    $\phi^{\textrm{T}}=\Diamond_{[5,18]}\Box_{[0,5]}\bar{G}^{\textrm{T}}\wedge \Diamond_{[24,39]}\bar{Y}^{\textrm{T}}$ are satisfying formulas for $\mathcal{S}^{\textrm{S}}_L$ and $\mathcal{S}^{\textrm{T}}_L$, respectively (see Section \ref{case_I} for details, where we set $\zeta=0.95$ and $\varrho_{\rm{th}}=4$). According to Definition \ref{MITL_equ}, $\phi^{\textrm{T}}$ is structurally equivalent with $\phi^{\textrm{S}}$. Therefore, $\mathcal{T}^{\textrm{T}}$ is logically transferable from $\mathcal{T}^{\textrm{S}}$ based on $\mathcal{S}^{\textrm{S}}_L$, $\mathcal{S}^{\textrm{T}}_L$, $\zeta=0.95$ and $\varrho_{\rm{th}}=4$. \color{black}

	\color{black} In the following, we explain the proposed transfer learning approach based on logical transferability in two different levels. We provide a workflow diagram as a general overview of the proposed transfer learning approach, as shown in Fig. \ref{block}. \color{black}
	
	\begin{figure}[th]
		\centering
		\color{black}
		\includegraphics[width=12cm]{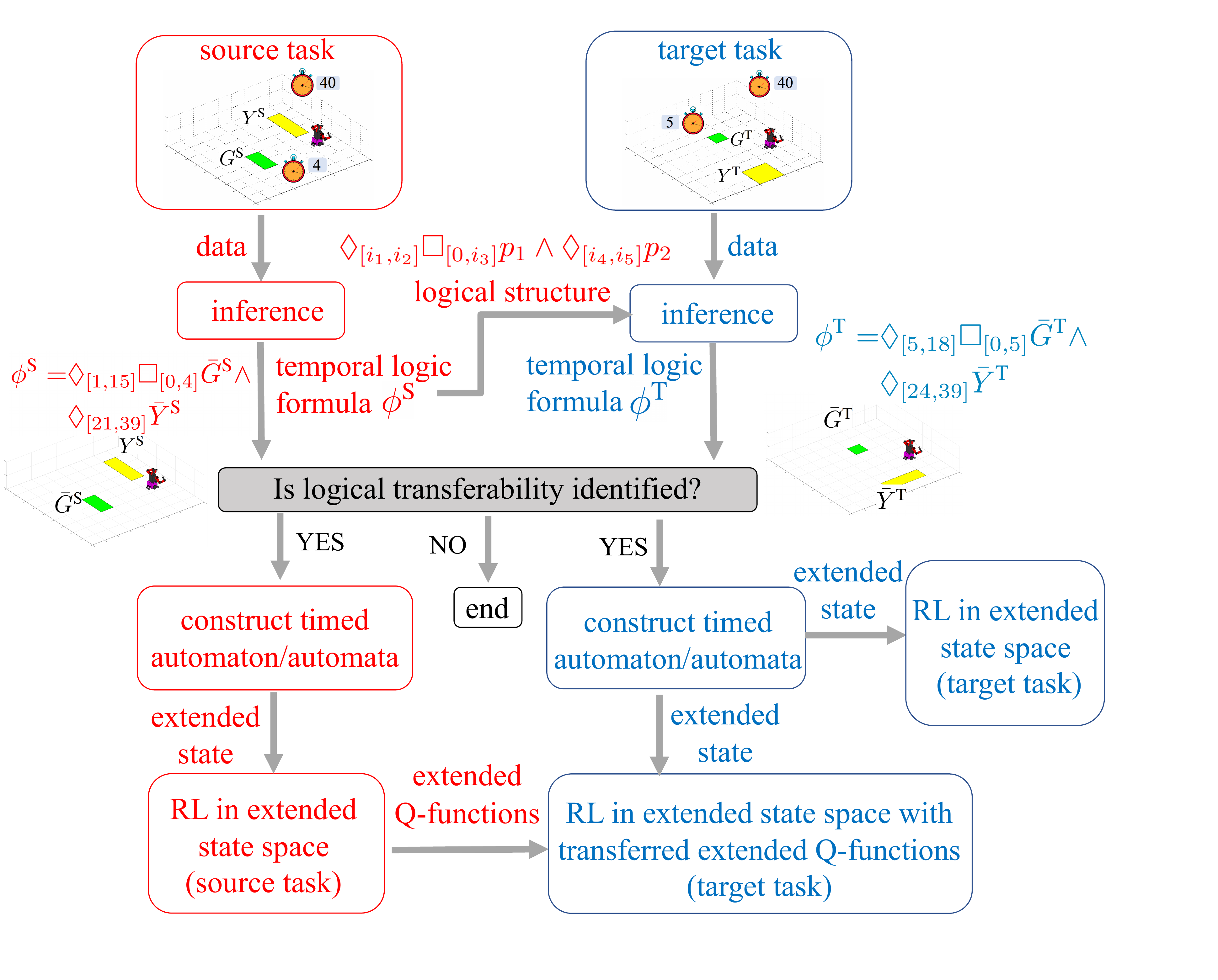}\caption{\color{black}Workflow diagram of the proposed transfer learning approach based on logical transferability.}                  
		\label{block}                                                                                                      
	\end{figure}

	\subsection{Transfer of Logical Structures Based on Hypothesis of Logical Transferability}                                             
	 \label{Sec_transferLogic}                                                            
	We first introduce the transfer of logical structures between temporal tasks. To this end, we pose the hypothesis that the target task is logically transferable from the source task. If logical transferability can be indeed identified, we perform RL for the target task utilizing the transferred logical structure. Specifically, we take the following three steps:

\noindent \textbf{Step 1}: \textit{Extracting MITL$_{f}$ formulas in the source task}.                                                  
From $\mathcal{S}_L^{\textrm{S}}$, we infer an MITL$_{f}$ formula $\phi^{\textrm{S}}$ using Algorithm \ref{algTree}. If $\phi^{\textrm{S}}$ is a satisfying formula for $\mathcal{S}_L^{\textrm{S}}$, we proceed to Step 2. 

\color{black} As in the introductory example, we obtain the satisfying formula $\phi^{\textrm{S}}=\Diamond_{[1,15]}\Box_{[0,4]}\bar{G}^{\textrm{S}}\wedge \Diamond_{[21,39]}\bar{Y}^{\textrm{S}}$ for $\mathcal{S}_L^{\textrm{S}}$. \color{black}\\

\noindent \textbf{Step 2}: \textit{Extracting MITL$_{f}$ formulas in the target task}.\\
From $\mathcal{S}_L^{\textrm{T}}$, we check if it is possible to infer a satisfying MITL$_{f}$ formula $\phi^{\textrm{T}}$ for $\mathcal{S}_L^{\textrm{T}}$ such that $\phi^{\textrm{T}}$ is structurally \color{black}transferable from \color{black} the inferred MITL$_{f}$ formula $\phi^{\textrm{S}}$ from the source task. \color{black}We start from inferring an MITL$_{f}$ formula that is structurally equivalent with $\phi^{\textrm{S}}$. This can be done by fixing the temporal operators (the same with those of $\phi^{\textrm{S}}$), then optimizing the parameters that appear in $\phi^{\textrm{T}}$  (through PSO) for maximizing the objective function in (\ref{obj}). If a satisfying MITL$_{f}$ formula is not found, we infer a MITL$_{f}$ formula $\phi^{\textrm{T}}$ in the form of $\phi^{\textrm{T}}=\phi^{\textrm{T}}_1\vee\phi^{\textrm{T}}_2$, where $\phi^{\textrm{T}}_1$ and $\phi^{\textrm{T}}_2$ are both structurally equivalent with $\phi^{\textrm{S}}$. In this way, we keep increasing the number of structurally equivalent formulas connected with disjunctions until a satisfying MITL$_{f}$ formula is found, or the size constraint is violated (i.e., $\varrho(\phi^{\textrm{T}})>\varrho_{\rm{th}}$). \color{black}If a satisfying MITL$_{f}$ formula is found, we proceed to Step 3; otherwise, logical transferability is not identified.  

 \color{black} As in the introductory example, we obtain the satisfying formula $\phi^{\textrm{T}}=\Diamond_{[5,18]}\Box_{[0,5]}\bar{G}^{\textrm{T}}\wedge \Diamond_{[24,39]}\bar{Y}^{\textrm{T}}$ for $\mathcal{S}_L^{\textrm{T}}$ and  $\phi^{\textrm{T}}$ is structurally equivalent with $\phi^{\textrm{S}}$, hence logical transferability is identified. \color{black}\\    

\noindent \textbf{Step 3}: \textit{Constructing timed automata and performing RL in the extended state space for the target task}.\\
For the satisfying formula $\phi^{\textrm{T}}=\phi^{\textrm{T}}_1\vee\dots\vee\phi^{\textrm{T}}_{m}$ in the SDNF, we can construct a deterministic timed automaton (DTA) $\mathcal{A}^{\phi^{\textrm{T}}_i}=\{2^{AP},\mathcal{Q}^{\phi^{\textrm{T}}_i},q^{0\phi^{\textrm{T}}_i}, C^{\phi^{\textrm{T}}_i}, \mathcal{F}^{\phi^{\textrm{T}}_i},\Delta^{\phi^{\textrm{T}}_i}\}$ \cite{Alur_Punctuality} that accepts precisely the timed words that satisfy each sequential conjunctive subformula $\phi^{\textrm{T}}_i$. 

We perform RL in the extended state space $X^{\textrm{T}}=\bigcup_{i} X^{\textrm{T}}_{i}$, where each $X^{\textrm{T}}_i=S^{\textrm{T}}\times\mathcal{Q}^{\phi^{\textrm{T}}_i}\times V^{\phi^{\textrm{T}}_i}$ ($S^{\textrm{T}}$ is the state space for the target task, $V^{\phi^{\textrm{T}}_i}$ is the set of clock valuations for the clocks in $C^{\phi^{\textrm{T}}_i}$) is a finite set of extended states. 
For each episode, the index $i$ is first selected based on some heuristic criterion. For example, if the atomic predicates correspond to the regions to be reached in the state space, we select $i$ such that the centroid of the region corresponding to the atomic predicate in $\phi^{1,\textrm{T}}_{i}$ (as in $\phi_i^{\textrm{T}}=\phi^{1,\textrm{T}}_i\wedge\dots\wedge\phi^{n_i,\textrm{T}}_i$) has the nearest (Euclidean) distance from the initial state $s^{\textrm{T}}_0$. Then we perform RL in $X^{\textrm{T}}_i$. For Q-learning, after taking action $a^{\textrm{T}}$ at the current extended state $\chi^{\textrm{T}}_i=(s^{\textrm{T}}, q^{\textrm{T}}, v^{\textrm{T}})\in X^{\textrm{T}}_i$, a new extended state $\chi^{'\textrm{T}}_i=(s^{'\textrm{T}}, q^{'\textrm{T}}, v^{'\textrm{T}})\in X^{\textrm{T}}_i$ and a reward $R^{\textrm{T}}$ are obtained. We have the following update rule for the \textit{extended Q-function} values (denoted as $\bar{Q}$): 
\begin{align}\nonumber                                               
\bar{Q}(\chi_i^{\textrm{T}}, a^{\textrm{T}})\leftarrow (1-\alpha)\bar{Q}(\chi_i^{\textrm{T}}, a^{\textrm{T}})+\alpha(R^{\textrm{T}}+\gamma\max_{a^{'\textrm{T}}}\bar{Q}(\chi_i^{'\textrm{T}}, a^{'\textrm{T}})),                                                                     
\end{align} 
where $\alpha$ and $\gamma$ are the learning rate and discount factor, respectively.              

 \color{black} As in the introductory example, we construct a DTA [see Fig. \ref{TA} (b)] that accepts precisely the timed words that satisfy $\phi^{\textrm{T}}$ as there is only one sequential conjunctive subformula in $\phi^{\textrm{T}}$. We then perform RL in the extended state space $X^{\textrm{T}}=S^{\textrm{T}}\times\mathcal{Q}^{\phi^{\textrm{T}}}\times V^{\phi^{\textrm{T}}}$, where $S^{\textrm{T}}$ is the state space in the 9$\times$9 gridworld, $\mathcal{Q}^{\phi^{\textrm{T}}}=\{q^{\textrm{T}}_0, q^{\textrm{T}}_1, q^{\textrm{T}}_2, q^{\textrm{T}}_3\} $ and $V^{\phi^{\textrm{T}}}=\{0, 1, \dots, 40\}\times\{0, 1, \dots, 40\}$ (the set of clock valuations for the clocks $c^{\textrm{T}}_1$ and $c^{\textrm{T}}_2$). \color{black}\\    
 	                      
\begin{figure}
	\centering
	\begin{subfigure}[b]{0.6\textwidth}
		\centering
		\includegraphics[width=\textwidth]{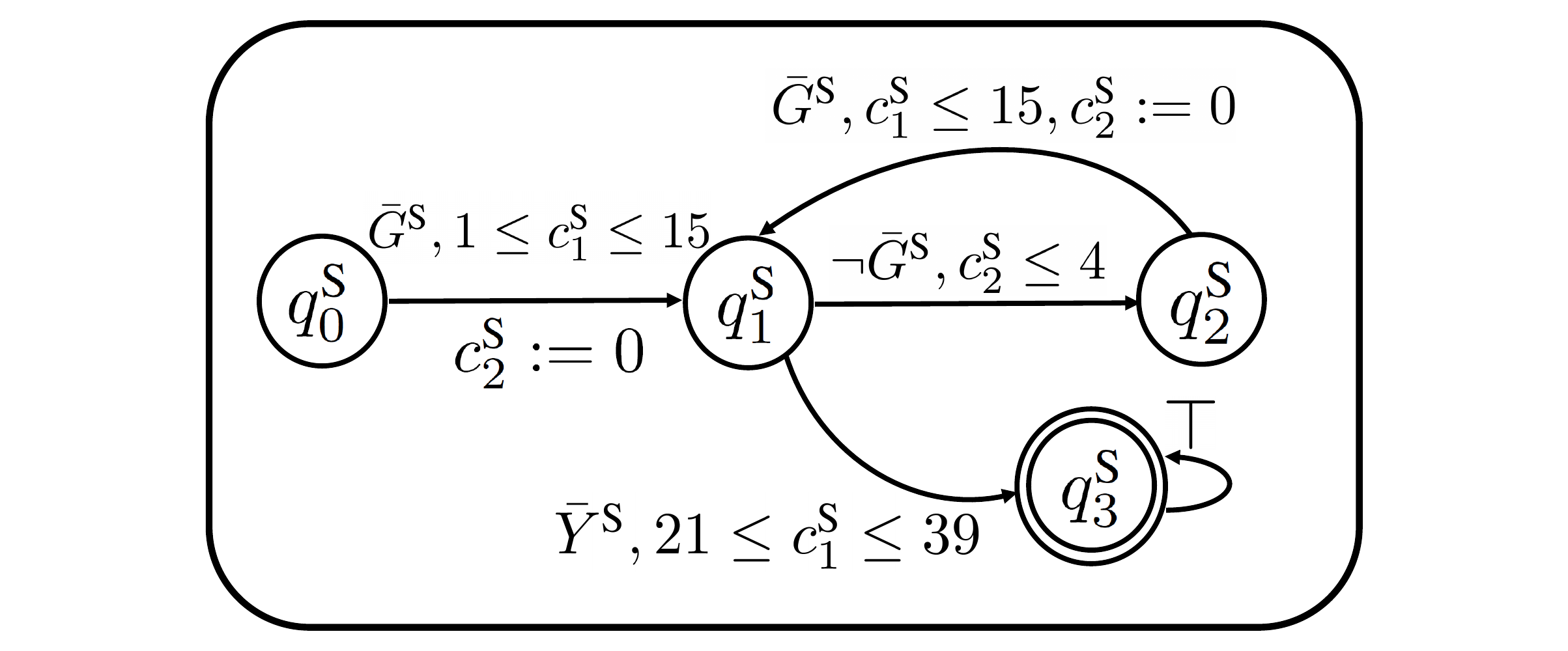}
		\caption{}
	\end{subfigure}
	\hfill
	\begin{subfigure}[b]{0.6\textwidth}
		\centering
		\includegraphics[width=\textwidth]{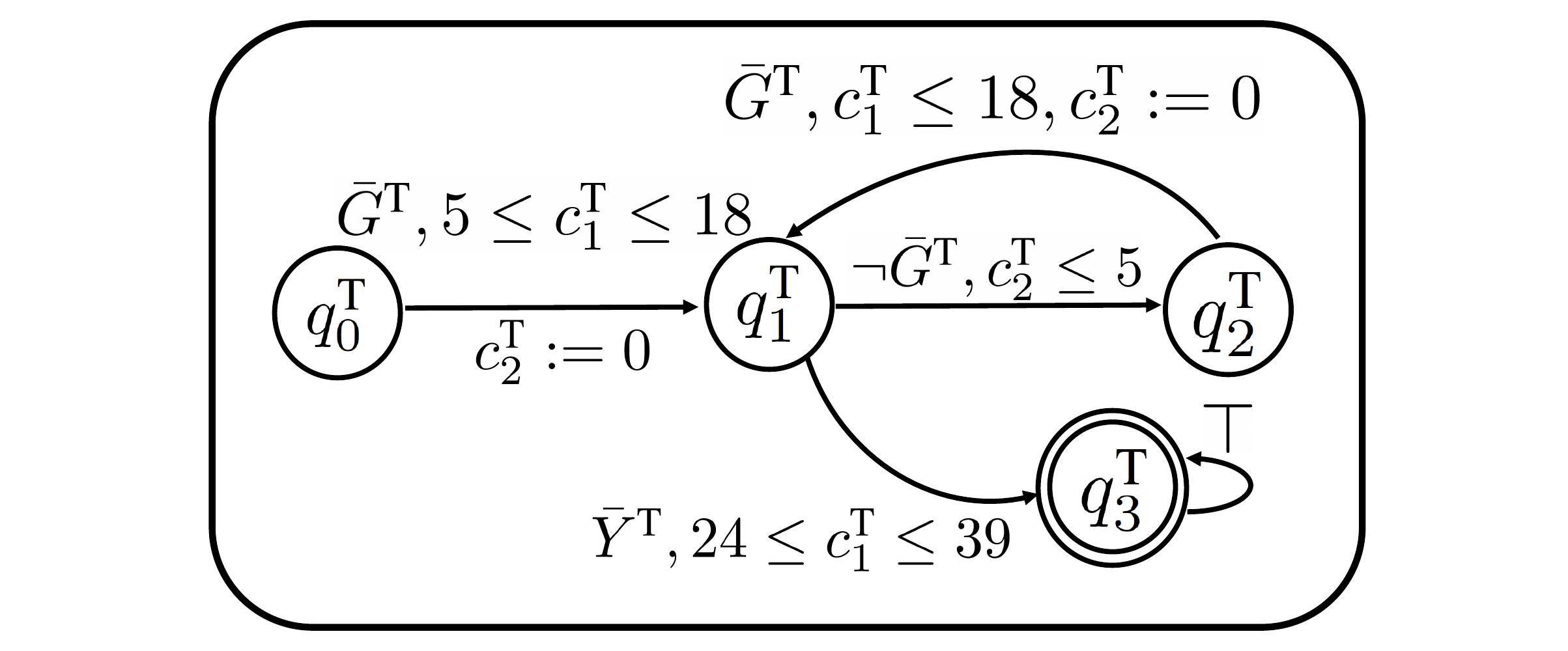}
		\caption{}
	\end{subfigure}
	\caption{\color{black}The deterministic timed automata (DTA) of two structurally equivalent formulas (a) $\Diamond_{[1,15]}\Box_{[0,4]}\bar{G}^{\textrm{S}}\wedge\Diamond_{[21,39]}\bar{Y}^{\textrm{S}}$ and (b)
		$\Diamond_{[5,18]}\Box_{[0,5]}\bar{G}^{\textrm{T}}\wedge \Diamond_{[24,39]}\bar{Y}^{\textrm{T}}$. The locations $q^{\textrm{S}}_0$, $q^{\textrm{S}}_1$, $q^{\textrm{S}}_2$ and $q^{\textrm{S}}_3$ correspond to $q^{\textrm{T}}_0$, $q^{\textrm{T}}_1$, $q^{\textrm{T}}_2$ and $q^{\textrm{T}}_3$, respectively. The atomic predicate $\bar{G}^{\textrm{S}}$ and $\bar{Y}^{\textrm{S}}$ correspond to $\bar{G}^{\textrm{T}}$ and $\bar{Y}^{\textrm{T}}$, respectively. The clocks $c^{\textrm{S}}_1$ and $c^{\textrm{S}}_2$ correspond to $c^{\textrm{T}}_1$ and $c^{\textrm{T}}_2$, respectively.}  
	\label{TA}
\end{figure}
	
	\subsection{Transfer of Extended Q-functions Based on Identified Logical Transferability}                    
	\label{Sec_Q}

Next, we introduce the transfer of extended Q-functions if logical transferability can be identified from Section \ref{Sec_transferLogic}. 

We assume that the sets of actions in the source task and the target task are the same, denoted as $A$. For the satisfying formula $\phi^{\textrm{S}}=\phi^{\textrm{S}}_1\vee\dots\vee\phi^{\textrm{S}}_{m}$ in the SDNF, we construct a DTA corresponding to each $\phi_i^{\textrm{S}}$ and perform Q-learning in the extended state space for the source task. We denote the obtained optimal extended Q-functions as $\bar{Q}^{\textrm{S}\ast}(s^{\textrm{S}}, q^{\textrm{S}}, v^{\textrm{S}},a)$. In the following, we explain the details for transferring $\bar{Q}^{\textrm{S}\ast}(s^{\textrm{S}}, q^{\textrm{S}}, v^{\textrm{S}},a)$ to the target task based on the identified logical transferability. 
	
From Definitions \ref{MITL_equ} and \ref{MITL_transfer}, if $\phi^{\textrm{T}}=\phi^{\textrm{T}}_1\vee\dots\vee\phi^{\textrm{T}}_{m^\textrm{T}}$ is structurally transferable from $\phi^{\textrm{S}}=\phi^{\textrm{S}}_1\vee\dots\vee\phi^{\textrm{S}}_{m^\textrm{S}}$, \color{black}then for all index $i\in[1,m^\textrm{T}]$, the sequential conjunctive subformulas $\phi^{\textrm{T}}_i$ and $\phi^{\textrm{S}}_{\hat{i}}$ are structurally equivalent, where $\hat{i}=i\mod m^{\textrm{S}}$ (where $\mod$ denotes the \textit{modulo} operation). \color{black}For the DTA $\mathcal{A}^{\phi^{\textrm{T}}_i}$ and $\mathcal{A}^{\phi^{\textrm{S}}_{\hat{i}}}$ constructed from $\phi^{\textrm{T}}_i$ and $\phi^{\textrm{S}}_{\hat{i}}$ respectively, it can be proven that we can establish bijective mappings: $\xi^{i,\hat{i}}_{\Sigma}: 2^{AP}\rightarrow2^{AP}$, $\xi^{i,\hat{i}}_{\mathcal{Q}}: \mathcal{Q}^{\phi_i^{\textrm{T}}}\rightarrow\mathcal{Q}^{\phi_{\hat{i}}^{\textrm{S}}}$ and $\xi^{i,\hat{i}}_{C}: C^{\phi_i^{\textrm{T}}}\rightarrow C^{\phi_{\hat{i}}^{\textrm{S}}}$ such that the \textit{structures} of $\mathcal{A}^{\phi_i^{\textrm{T}}}$ and $\mathcal{A}^{\phi_{\hat{i}}^{\textrm{S}}}$ are preserved under these bijective mappings \cite{Glushkov1961}. Specifically, we have $\xi^{i,\hat{i}}_{\mathcal{Q}}(q^{0\phi_{i}^{\textrm{T}}})=q^{0\phi_{\hat{i}}^{\textrm{S}}}$, $\xi^{i,\hat{i}}_{\mathcal{Q}}[\mathcal{F}^{\phi_{i}^{\textrm{T}}}]=\mathcal{F}^{\phi_{\hat{i}}^{\textrm{S}}}$ (where $\xi^{i,\hat{i}}_{\mathcal{Q}}[\mathcal{F}^{\phi_{i}^{\textrm{T}}}]$ denotes the point-wise application of $\xi^{i,\hat{i}}_{\mathcal{Q}}$ to elements of $\mathcal{F}^{\phi_{i}^{\textrm{T}}}$). Besides, for any $\rho\in2^{AP}$ and any $q, q'\in\mathcal{Q}^{\phi_{i}^{\textrm{T}}}$, we have that                                  
	\begin{align}\nonumber                                                
	e^{\phi_{i}^{\textrm{T}}}=(q, \rho, q', \varphi^{\phi_{i}^{\textrm{T}}}_{C_1}, r^{\phi_{i}^{\textrm{T}}}_{C_1})\in \Delta^{\phi_{i}^{\textrm{T}}}                                                                                
	\end{align}  
	holds if and only if                                                                                   
	\begin{align}\nonumber                                               
	e^{\phi_{\hat{i}}^{\textrm{S}}}=(\xi^{i,\hat{i}}_{\mathcal{Q}}(q), \xi^{i,\hat{i}}_{\Sigma}(\rho), \xi^{i,\hat{i}}_{\mathcal{Q}}(q'), \varphi^{\phi_{\hat{i}}^{\textrm{S}}}_{C_2},                         r^{\phi_{\hat{i}}^{\textrm{S}}}_{C_2})\in\Delta^{\phi_{\hat{i}}^{\textrm{S}}}                            
	\end{align} 
	holds, where $C_1=C^{\phi_{i}^{\textrm{T}}}$ and $C_2=\xi^{i,\hat{i}}_{C}[C^{\phi_{i}^{\textrm{T}}}]$. See Fig. \ref{TA} for an illustrative example.

	\begin{algorithm}[t]
	\caption{Transfer of Extended Q-functions Based on Logical Transferability.}                                                                                                                                                                    
	\label{policy}
	\begin{algorithmic}[1]
		\State \color{black}\textbf{Inputs:}  $\phi^{\textrm{S}}$, $\phi^{\textrm{T}}$, $X^{\textrm{S}}$, $X^{\textrm{T}}$, $A$\color{black}
		\For{all index $i$ and all $\chi_i^{\textrm{T}}=(s^{\textrm{T}},q^{\textrm{T}}, v^{\textrm{T}})\in X^{\textrm{T}}_i$} 
		\State Identify the unique formula $\phi^{j,\textrm{T}}_i$ and its atomic predicate $\rho^{j,\textrm{T}}_{i}$ \label{identify_formula}
		\State $\hat{i}\gets\mod(i,m^{\textrm{S}})$
		\State $\rho^{\textrm{S}}\gets\xi^{i,\hat{i}}_{\Sigma}(\rho^{j,\textrm{T}}_{i})$ \label{rho_map}
		\State $s^{\textrm{S}}\gets\argmin\limits_{s\in S^{\textrm{S}}}\norm{\big(s-Cr(\rho^{\textrm{S}})\big)-\big(s^{\textrm{T}}-Cr(\rho^{j,\textrm{T}}_{i})\big)}$ \label{s_map}
		\State $q^{\textrm{S}}\gets\xi^{i,\hat{i}}_{\mathcal{Q}}(q^{\textrm{T}})$ \label{q_map}
		\State \textbf{for} all $c_l^{\textrm{T}}\in C^{\phi_i^{\textrm{T}}}$ and all $v_l^{\textrm{T}}\in V_l^{\phi_i^{\textrm{T}}}$  \textbf{do}               
		\State $c_k^{\textrm{S}}\gets\xi^{i,\hat{i}}_{C}(c_l^{\textrm{T}})$  \label{c_map_s}                            
		\State $v_k^{\textrm{S}}\gets\argmin\limits_{v_k\in V_k^{\phi_{\hat{i}}^{\textrm{S}}}}\norm{v_k-v_l^{\textrm{T}}}$  \label{c_map_e}                                      
		\State \textbf{end for} 
		\State $\bar{Q}^{\textrm{T}}(s^{\textrm{T}}, q^{\textrm{T}}, v^{\textrm{T}}, a)\gets\bar{Q}^{\textrm{S}\ast}(s^{\textrm{S}}, q^{\textrm{S}}, v^{\textrm{S}}, a)$ \textbf{for} all $a\in A$                            
		\EndFor
		\State Return $\bar{Q}^{\textrm{T}}$  
	\end{algorithmic} 
\end{algorithm}                                                          
	
    Algorithm \ref{policy} is for the transfer of the extended Q-functions.  For all indices $i$, we first identify a unique primitive MITL$_{f}$ formula $\phi^{j,\textrm{T}}_i$ (as in $\phi_i^{\textrm{T}}=\phi^{1,\textrm{T}}_i\wedge\dots\wedge\phi^{n_i,\textrm{T}}_i$) that is to be satisfied at each extended state $\chi_i^{\textrm{T}}=(s^{\textrm{T}},q^{\textrm{T}}, v^{\textrm{T}})$ (Line \ref{identify_formula}). Specifically, according to $q^{\textrm{T}}$ and $v^{\textrm{T}}$, we identify the index $j$ such that $\phi^{1,\textrm{T}}_{i}$, $\dots$, $\phi^{(j-1),\textrm{T}}_{i}$ are already satisfied while $\phi^{j,\textrm{T}}_{i}$ is still not satisfied.

    Next, we identify the state, location and clock valuation in the extended state $\chi_{\hat{i}}^{\textrm{S}}$ that are the most \textit{similar} to $s^{\textrm{T}}$, $q^{\textrm{T}}$ and $v^{\textrm{T}}$ respectively in the extended state $\chi_i^{\textrm{T}}$. 
    
	\noindent\textbf{Identification of State:} We first identify the atomic predicate $\rho^{j,\textrm{T}}_{i}$ in the primitive MITL$_{f}$ formula $\phi^{j,\textrm{T}}_i$. Then we identify the atomic predicate $\rho^{\textrm{S}}$ corresponding to $\rho^{j,\textrm{T}}_{i}$ through the mapping $\xi^{i,\hat{i}}_{\Sigma}$ (Line \ref{rho_map}). We use $Cr(\rho)$ to denote the centroid of the region corresponding to the atomic predicate $\rho$ and $\norm{\cdot}$ to denote the 2-norm. We identify the state $s^{\textrm{S}}$ in $S^{\textrm{S}}$ such that the relative position of $s^{\textrm{S}}$ with respect to $Cr(\rho^{\textrm{S}})$ is the most similar (measured in Euclidean distance) to the relative position of $s^{\textrm{T}}$ with respect to $Cr(\rho^{j,\textrm{T}}_{i})$ (Line \ref{s_map}). \color{black} As in the introductory example, at the locations $q^{\textrm{S}}_0$ and $q^{\textrm{T}}_0$, we first identify the atomic predicate $\bar{G}^{\textrm{T}}$, then the mapping $\xi^{1,1}_{\Sigma}$ maps $\bar{G}^{\textrm{T}}$ to its corresponding atomic predicate $\bar{G}^{\textrm{S}}$. For a state $s^{\textrm{T}}$ in $B^{\textrm{T}}$ in the target task, we identify the state $s^{\textrm{S}}$ in $B^{\textrm{S}}$ (see Fig. \ref{map_intro_infer}) in the source task, as the relative position of $s^{\textrm{S}}$ in $B^{\textrm{S}}$ with respect to the centroid of $\bar{G}^{\textrm{S}}$ is the most similar (measured in Euclidean distance) to the relative position of $s^{\textrm{T}}$ in $B^{\textrm{T}}$ with respect to the centroid of $\bar{G}^{\textrm{T}}$.   \color{black}                                                                                                                      
	
	\noindent\textbf{Identification of Location:} We identify the location $q^{\textrm{S}}$ corresponding to $q^{\textrm{T}}$ through the mapping $\xi^{i,\hat{i}}_{\mathcal{Q}}$ (Line \ref{q_map}). 
	\color{black} As in the introductory example, the mapping $\xi^{1,1}_{\mathcal{Q}}$ maps the locations $q^{\textrm{T}}_0$, $q^{\textrm{T}}_1$, $q^{\textrm{T}}_2$ and $q^{\textrm{T}}_3$ to the locations $q^{\textrm{S}}_0$, $q^{\textrm{S}}_1$, $q^{\textrm{S}}_2$ and $q^{\textrm{S}}_3$, respectively (see Fig. \ref{TA}).  \color{black}      
	
	\noindent\textbf{Identification of Clock Valuation:} For each clock $c_l^{\textrm{T}}\in C^{\phi_i^{\textrm{T}}}$, we identify the clock $c_k^{\textrm{S}}$ corresponding to $c_l^{\textrm{T}}$ through the mapping $\xi^{i,\hat{i}}_{C}$ (Line \ref{c_map_s}). \color{black} As in the introductory example, the mapping $\xi^{1, 1}_{C}$ maps the clocks $c^{\textrm{T}}_1$ and $c^{\textrm{T}}_2$ to the corresponding clocks $c^{\textrm{S}}_1$ and $c^{\textrm{S}}_2$, respectively (see Fig. \ref{TA}). \color{black} Then for each clock valuation $v_l^{\textrm{T}}\in V_l^{\phi_i^{\textrm{T}}}$, we identify the (scalar) clock valuation $v^{\textrm{S}}_k\in V_k^{\phi_{\hat{i}}^{\textrm{S}}}$ which is the most similar (in scalar value) to $v_l^{\textrm{T}}$ (Line \ref{c_map_e}).

    In this way, $\bar{Q}^{\textrm{S}\ast}(s^{\textrm{S}}, q^{\textrm{S}}, v^{\textrm{S}},a)$ from the source task are transferred to $\bar{Q}^{\textrm{T}}(s^{\textrm{T}}, q^{\textrm{T}}, v^{\textrm{T}},a)$ in the target task. Finally, we perform Q-learning of $\mathcal{T}^{\textrm{T}}$ in the extended state space, starting with the transferred extended Q-functions $\bar{Q}^{\textrm{T}}(s^{\textrm{T}}, q^{\textrm{T}}, v^{\textrm{T}},a)$.

\section{Case Studies}                    
In this section, we illustrate the proposed approach on \color{black}two case studies. In Case Study 1, the gridworlds in the source environment and the target environment are of the same size. In Case Study 2, the gridworld in the target environment is larger than that in the source environment.  \color{black}   

\subsection{Case Study 1}                     
\label{case_I}
In Case Study 1, we consider the introductory
example in the 9$\times$9 gridworld as shown in Fig.\ref{map_intro}. The robot has three possible actions at each time step: go straight, turn left or turn right. After going straight, the robot may slip to adjacent cells with probability of 0.04. After turning left or turning right, the robot may stay in the original direction with probability of 0.03. We first perform Q-learning on the $\tau$-states (i.e., the $\tau$-horizon trajectory involving the current state and the most recent $\tau-1$ past states, see \cite{Aksaray2016}, we set $\tau$=5) for the source task and the target task. We set $\alpha=0.8$ and $\gamma=0.99$. For each episode, the initial state is randomly selected.

We use the first 10000 episodes of Q-learning as the \textit{data collection phase}. From the source task, 46 out of the 10000 trajectories with cumulative rewards above 0 are labeled as 1, and 200 trajectories randomly selected out of the remaining 9954 trajectories are labeled as -1. From the target task, 19 trajectories are labeled as 1 and 200 trajectories are labeled as -1 with the same labeling criterion. 

For the inference problem (Problem \ref{problem1}), we set $\varrho_{\rm{th}}=4$ and $\zeta=0.95$. For Algorithm \ref{algTree}, we set $\lambda=0.01$ and $h_{\textrm{max}}=2$. We use the position of the robot as the state, and the atomic predicates $\rho$ correspond to the rectangular regions in the 9$\times$9 gridworld. For computing the information gain of MITL$_{f}$ formulas, we use the uniform distribution for the prior probability distribution $\mathcal{G}_L$. Following the first two steps illustrated in Section \ref{Sec_transferLogic}, we obtain the following satisfying formulas:                                                                                                               
\[		                 
\begin{split}
\phi^{\textrm{S}}=&\Diamond_{[1,15]}\Box_{[0,4]}\bar{G}^{\textrm{S}}\wedge \Diamond_{[21,39]}\bar{Y}^{\textrm{S}}~\textrm{and}\\  
\phi^{\textrm{T}}=&\Diamond_{[5,18]}\Box_{[0,5]}\bar{G}^{\textrm{T}}\wedge \Diamond_{[24,39]}\bar{Y}^{\textrm{T}},
\end{split} 
\]
where 
\color{black}
\[ 
\begin{split}
&\bar{G}^{\textrm{S}}=(x>=3)\wedge(x<=4)\wedge(y>=3)\wedge(y<=5),\\
&\bar{Y}^{\textrm{S}}=(x>=7)\wedge(x<=8)\wedge(y>=5)\wedge(y<=8),\\
&\bar{G}^{\textrm{T}}=(x>=5)\wedge(x<=6)\wedge(y>=6)\wedge(y<=7),\\
&\bar{Y}^{\textrm{T}}=(x>=4)\wedge(x<=7)\wedge(y>=1)\wedge(y<=2).
\end{split}                                                                         
\]
$\phi^{\textrm{S}}$ reads as ``first reach $\bar{G}^{\textrm{S}}$ during the time interval [1, 15] and stay there for 4 time units, then reach $\bar{Y}^{\textrm{S}}$ during the time interval [21, 39]''. $\phi^{\textrm{T}}$ reads as ``first reach $\bar{G}^{\textrm{T}}$ during the time interval [5, 18] and stay there for 5 time units, then reach $\bar{Y}^{\textrm{T}}$ during the time interval [24, 39]''. \color{black}The regions $\bar{G}^{\textrm{S}}$, $\bar{Y}^{\textrm{S}}$, $\bar{G}^{\textrm{T}}$ and $\bar{Y}^{\textrm{T}}$ are shown in Fig. \ref{map_I}  (a) (b). It can be seen that $\phi^{\textrm{T}}$ is structurally equivalent with $\phi^{\textrm{S}}$, hence logical transferability is identified.

For comparison, we also obtain $\phi^{'\textrm{T}}$ without considering the information gain, i.e., by setting $\lambda=0$ in (\ref{obj}):
\[
\begin{split}
\phi^{'\textrm{T}}=\Diamond_{[1,18]}\Box_{[0,4]}\bar{G}^{\textrm{T}}\wedge \Diamond_{[24,40]}\bar{Y}^{'\textrm{T}},
\end{split}
\]            
where 
\color{black}
\[ 
\begin{split}
&\bar{G}^{\textrm{T}}=(x>=5)\wedge(x<=6)\wedge(y>=6)\wedge(y<=7),\\
&\bar{Y}^{'\textrm{T}}=(x>=0)\wedge(x<=8)\wedge(y>=0)\wedge(y<=2).
\end{split}                                                                         
\]                                  
$\phi^{'\textrm{T}}$ reads as ``first reach $\bar{G}^{\textrm{T}}$ during the time interval [1, 18] and stay there for 4 time units, then reach $\bar{Y}^{'\textrm{T}}$ during the time interval [24, 40]''.\color{black} $\bar{G}^{\textrm{T}}$ and $\bar{Y}^{'\textrm{T}}$ are shown in Fig. \ref{map_I} (c). It can be seen that $\phi^{\textrm{T}}$ implies $\phi^{'\textrm{T}}$, hence $\phi^{'\textrm{T}}$ is less informative than $\phi^{\textrm{T}}$ with respect to the prior probability distribution $\mathcal{G}_L$. 

\begin{figure}[th]
	\centering
	\includegraphics[width=12cm]{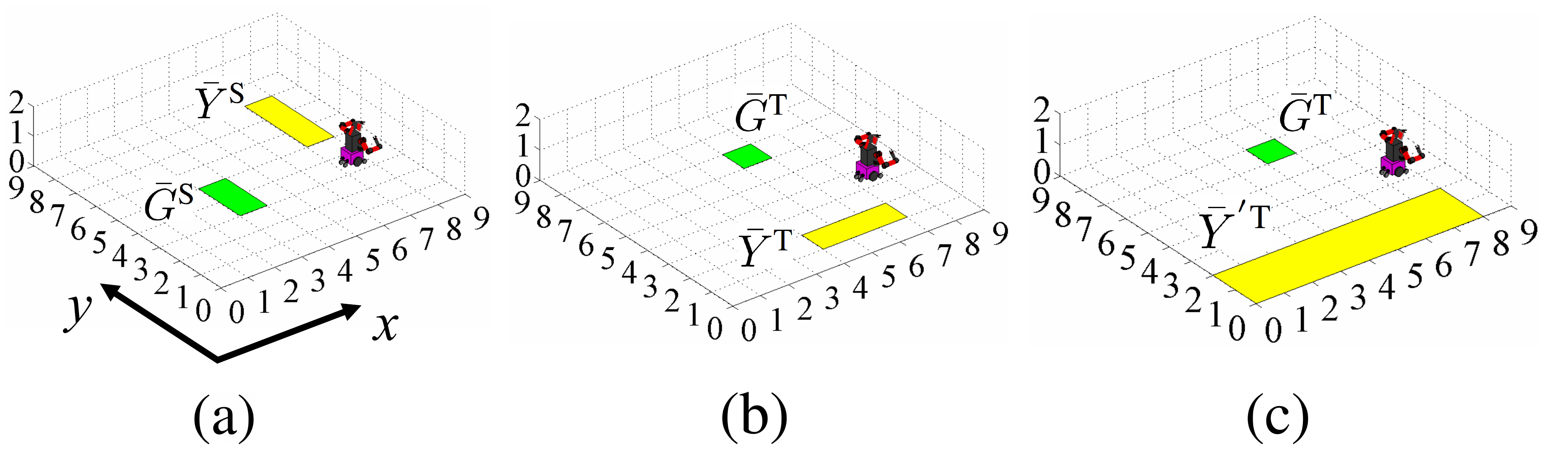}\caption{Inferred regions in Case Study 1.}                   
	\label{map_I}                                                                                                      
\end{figure}

We use Method I to refer to the Q-learning on the $\tau$-states. In comparison with Method I, we perform Q-learning in the extended state space with the following three methods:\\
Method II: Q-learning with $\phi^{'\textrm{T}}$ (i.e., on the extended state that includes the locations and clock valuations of the timed automata constructed from $\phi^{'\textrm{T}}$). \\
Method III: Q-learning with $\phi^{\textrm{T}}$. \\
Method IV: Q-learning with $\phi^{\textrm{T}}$ and starting from the transferred extended Q-functions.                                       

Fig. \ref{result_I} shows the learning results with the four different methods. Method I takes an average of 834590 episodes to converge to the optimal policy (with the first 50000 episodes shown in Fig. \ref{result_I}), while Method III and Method IV take an average of 13850 episodes and 2220 episodes for convergence to the optimal policy, respectively. It should be noted that although Method II performs better than Method I in the first 50000 episodes, it does not achieve optimal performance in 2 million episodes (as $\phi^{'\textrm{T}}$ is not sufficiently informative). In sum, the sampling efficiency for the target task is improved by up to one order of magnitude by performing RL in the extended state space with the inferred formula $\phi^{\textrm{T}}$, and further improved by up to another order of magnitude using the transferred extended Q-functions.

\begin{figure}[th]
	\centering
	\includegraphics[width=10cm]{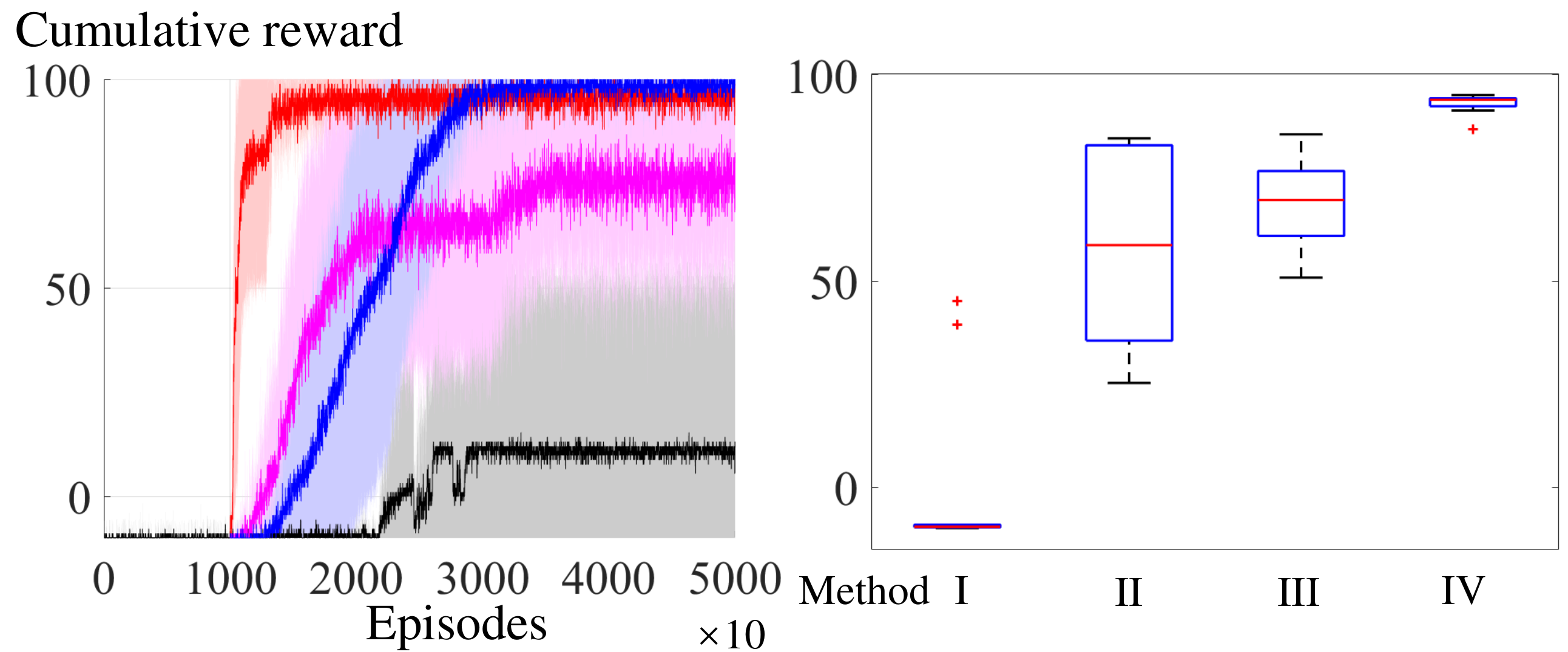}\caption{Learning results in Case Study 1: cumulative rewards of 10 independent simulation runs averaged for every 10 episodes (left) and boxplot of the 10 runs for the average cumulative rewards of 40000 episodes after the data collection phase (right). Black: Method I; magenta: Method II; blue: Method III; red: Method IV.}
	\label{result_I}
\end{figure}                                                                                       

\color{black}
\subsection{Case Study 2}  

\begin{figure}[th]
	\centering
	\color{black}
	\includegraphics[width=10cm]{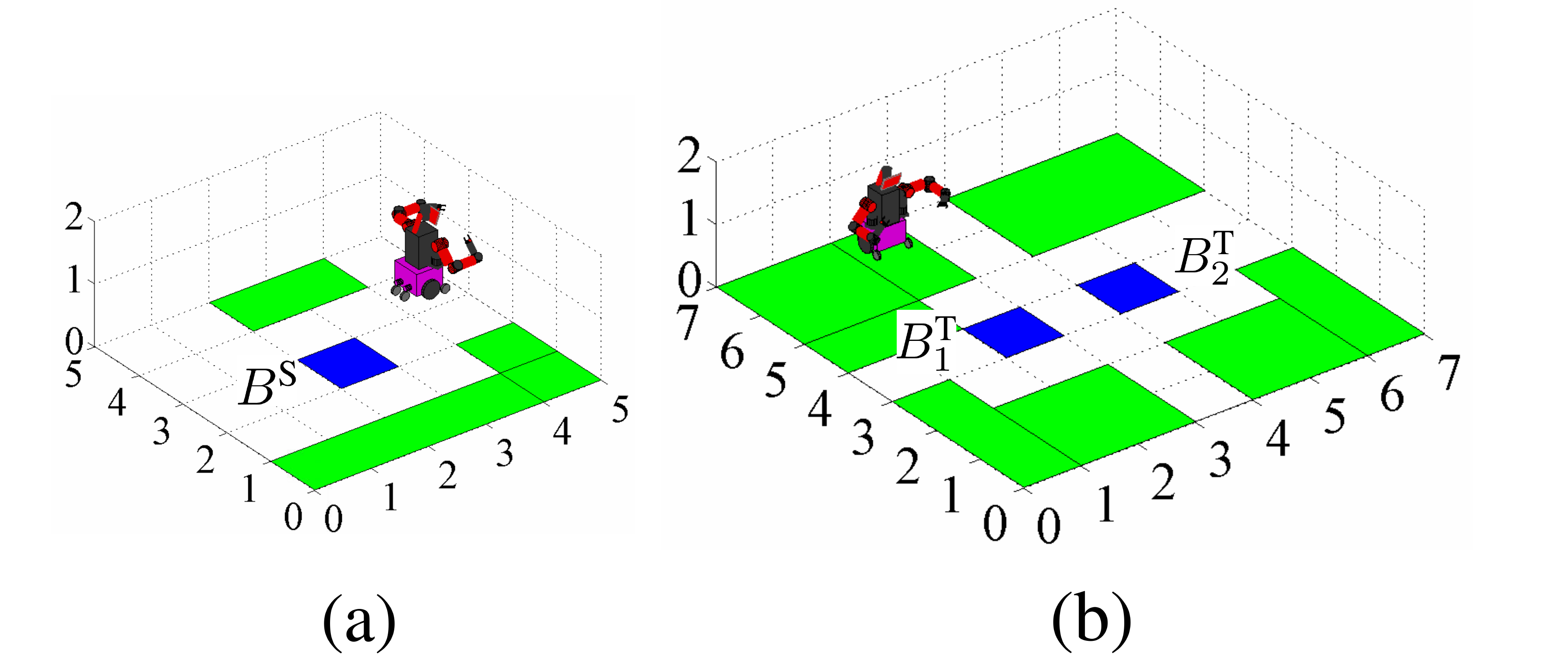}\caption{The source environment (a) and target environment (b) in Case Study 2.}
	\label{map_II}
\end{figure}                                                                

In Case Study 2, we consider an example where the gridworlds in the source environment and target environment are of different sizes. In the source environment [a $5\times5$ gridworld as shown in Fig. \ref{map_II} (a)], the robot obtains a reward of 100 for each time unit (within 25 time units) it is in the two green regions. In the meantime, there is an underlying rule for the source task requiring that,  within 25 time units, the robot should come back to a blue region ($B^{\textrm{S}}$) in every 8 time units. If the robot breaks the rule, it will fail the task immediately and obtain a reward of -800. In the target environment [a $7\times7$ gridworld as shown in Fig. \ref{map_II} (b)], there are four green regions and two blue regions ($B^{\textrm{T}}_1$ and $B^{\textrm{T}}_2$). The underlying rule for the target task requires that, within 40 time units, the robot should come back to one of the blue regions in every 10 time units (it should be the same blue region every time), with the same reward of -800 for breaking the rule. In both tasks, the blue regions are a priori unknown to the robot. The robot has three possible actions at each time step: go straight, turn left or turn right. After going straight, the robot may slip to adjacent cells with probability of 0.04. After turning left or turning right, the robot may stay in the original direction with probability of 0.03. 

We set $\alpha=0.8$ and $\gamma=0.99$. For each episode, the initial state is randomly selected. We use the first 1000 episodes of Q-learning on the $\tau$-states as the \textit{data collection phase}. We label the collected trajectories based on the lengths of the trajectories as we intend to infer an MITL$_{f}$ formula that enables the robot to obey the underlying rule for longer time (which is essential to gain higher rewards in the long run). From the source task, 22 out of the 1000 trajectories with lengths of at least 20 are labeled 1, and 200 trajectories randomly selected out of the remaining 978 trajectories are labeled -1. From the  target task, 34 trajectories are labeled 1 and 200 trajectories are labeled as -1 with the same labeling criterion. We delete the states at the last time unit of the trajectories with label 1 so that these trajectories all represent behaviors that obey the rule until the end of the time. 

Following the first two steps illustrated in Section \ref{Sec_transferLogic} and using the same hyperparameters as in Case Study 1, we obtain the following satisfying formulas:                                                                                                               
\[ 
\begin{split}
\psi^{\textrm{S}}=&\Box_{[0,25]}\Diamond_{[0,8]}\bar{B}^{\textrm{S}},\\                          
\psi^{\textrm{T}}=&\Box_{[0,40]}\Diamond_{[0,10]}\bar{B}^{\textrm{T}}_1\vee\Box_{[0,40]}\Diamond_{[0,10]}\bar{B}^{\textrm{T}}_2,
\end{split}                                                                         
\]
where
\[ 
\begin{split}
 &\bar{B}^{\textrm{S}}=(x>=2)\wedge(x<=3)\wedge(y>=2)\wedge(y<=3),\\
 &\bar{B}^{\textrm{T}}_1=(x>=2)\wedge(x<=3)\wedge(y>=3)\wedge(y<=4),\\
 &\bar{B}^{\textrm{T}}_2=(x>=4)\wedge(x<=5)\wedge(y>=3)\wedge(y<=4).
\end{split}                                                                         
\]
$\psi^{\textrm{S}}$ reads as ``during the time interval [0, 25], reach $\bar{B}^{\textrm{S}}$ for at least 1 time unit in every 8 time units''. $\psi^{\textrm{T}}$ reads as ``during the time interval [0, 40], either reach $\bar{B}^{\textrm{T}}_1$ for at least 1 time unit in every 10 time units, or reach $\bar{B}^{\textrm{T}}_2$ for at least 1 time unit in every 10 time units''. As shown in Fig. \ref{map_II_infer} (a) and (b), the inferred regions $\bar{B}^{\textrm{S}}$, $\bar{B}^{\textrm{T}}_1$ and $\bar{B}^{\textrm{T}}_2$ are the same as $B^{\textrm{S}}$, $B^{\textrm{T}}_1$ and $B^{\textrm{T}}_2$ (in Fig. \ref{map_II}), respectively. It can be seen that $\psi^{\textrm{T}}$ is structurally transferable from $\psi^{\textrm{S}}$, hence logical transferability is identified.                                                        

\begin{figure}[th]
	\centering
	\color{black}
	\includegraphics[width=12cm]{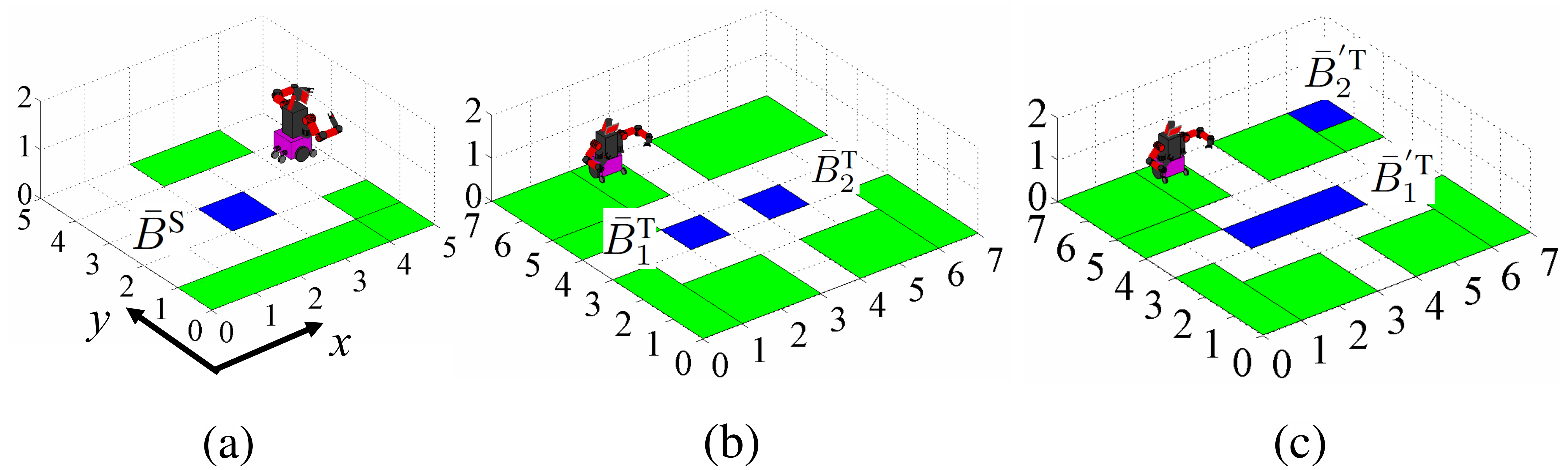}\caption{Inferred regions in Case Study 2.}
	\label{map_II_infer}
\end{figure}

\begin{figure}[th]
	\centering
	\color{black}
	\includegraphics[width=12cm]{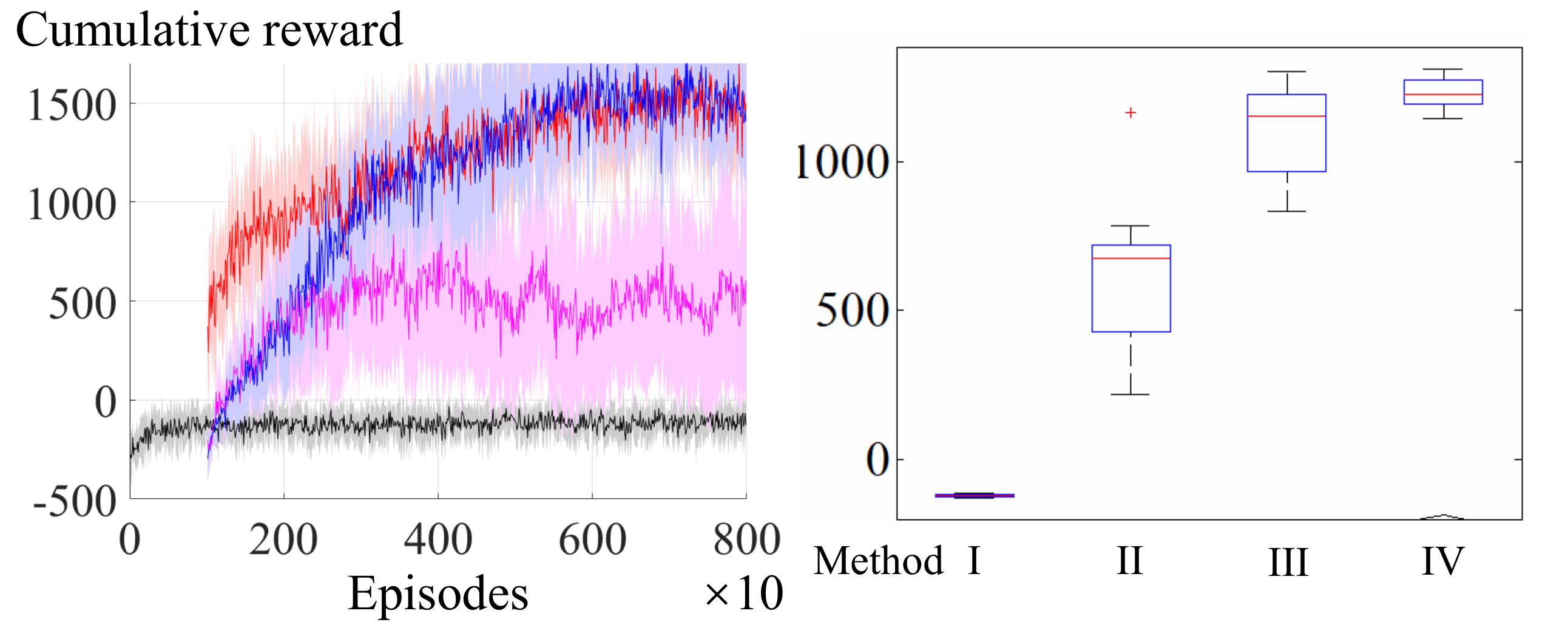}\caption{\color{black}Learning results in Case Study 2: cumulative rewards of 10 independent simulation runs averaged for every 10 episodes (left) and boxplot of the 10 runs for the average cumulative rewards of 7000 episodes after the data collection phase (right). Black: method I; magenta: method II; blue: method III; red: method IV.}
	\label{result_II}
\end{figure}    

For comparison, we also obtain $\psi^{'\textrm{T}}$ without considering the information gain, i.e., by setting $\lambda=0$ in (\ref{obj}):
\[
\begin{split}
\psi^{'\textrm{T}}=\Box_{[0,40]}\Diamond_{[0,10]}\bar{B}^{'\textrm{T}}_1\vee\Box_{[0,40]}\Diamond_{[0,9]}\bar{B}^{'\textrm{T}}_2,
\end{split} 
\]
where
\[ 
\begin{split}
&\bar{B}^{'\textrm{T}}_1=(x>=2)\wedge(x<=5)\wedge(y>=3)\wedge(y<=4),\\   
&\bar{B}^{'\textrm{T}}_2=(x>=6)\wedge(x<=7)\wedge(y>=6)\wedge(y<=7).
\end{split}                                                                         
\]
$\psi^{'\textrm{T}}$ reads as ``during the time interval [0, 40], either reach $\bar{B}^{'\textrm{T}}_1$ for at least 1 time unit in every 10 time units, or reach $\bar{B}^{'\textrm{T}}_2$ for at least 1 time unit in every 9 time units''. The regions $\bar{B}^{'\textrm{T}}_1$ and $\bar{B}^{'\textrm{T}}_2$ are as shown in Fig. \ref{map_II_infer} (c). It can be seen that $\psi^{\textrm{T}}$ implies $\psi^{'\textrm{T}}$, hence $\psi^{'\textrm{T}}$ is less informative than $\psi^{\textrm{T}}$ with respect to the uniform prior probability distribution. 

Similar to Case Study 1, we perform Q-learning with methods I, II, III and IV, where we use the formula $\psi^{'\textrm{T}}$ for method II, and $\psi^{\textrm{T}}$ for method III and method IV. Fig. \ref{result_II} shows the learning results with the four different methods.  Convergence to the optimal policy is not achieved in 2 million episodes by method I and method II, while method III and method IV take an average of 1540 episodes and 610 episodes respectively for exceeding cumulative rewards of 1000 for the first time, and both take an average of about 6000 episodes for convergence to the optimal policy.      
\color{black}              

\section{Discussions}                                                                    
We proposed a transfer learning approach for temporal tasks based on logical transferability. We have shown the improvement of sampling efficiency in the target task using the proposed method. 

There are several limitations of the current approach, which leads to possible directions for future work. Firstly, the proposed logical transferability is a qualitative measure of the logical similarities between the source task and the target task. Quantitative measures of logical similarities can be further established using similarity metrics between the inferred temporal logic formulas from the two tasks. \color{black}Secondly, as some information about the task may not be discovered during the initial episodes of reinforcement learning (especially for more complicated tasks), the inferred temporal logic formulas can be incomplete or biased. We will develop methods for more complicated tasks by either breaking the tasks into simpler subtasks, or iteratively performing inference of temporal logic formulas and reinforcement learning as a closed loop process. Finally, \color{black}we use Q-learning as the underlying learning algorithm for the transfer learning approach. The same methodology can be also applied to other forms of reinforcement learning, such as actor-critic methods or model-based reinforcement learning.

\section*{Acknowledgements}    
This research was partially supported by AFOSR FA9550-19-1-0005, DARPA D19AP00004, NSF 1652113, ONR N00014-18-1-2829 and NASA 80NSSC19K0209. 

\bibliographystyle{IEEEtran}
\bibliography{zheRLrefwhole}      
	
\end{document}